\newcommand{\ignore}[1]{}
\newcommand{\paratitle}[1]{\vspace{1.5ex}\noindent\textbf{#1}}
\newtheorem{theorem}{Theorem}[section]
\title{Reasoning Models Hallucinate More: Factuality-Aware Reinforcement Learning for \\ Large Reasoning Models}
\author{%
  Junyi Li \qquad Hwee Tou Ng \\
  Department of Computer Science, National University of Singapore\\
  \texttt{cheneyjunyi@gmail.com, nght@comp.nus.edu.sg}\\
}
\begin{document}

\maketitle

\begin{abstract}

Large language models (LLMs) have significantly advanced in reasoning tasks through reinforcement learning (RL) optimization, achieving impressive capabilities across various challenging benchmarks. However, our empirical analysis reveals a critical drawback: reasoning-oriented RL fine-tuning significantly increases the prevalence of hallucinations. We theoretically analyze the RL training dynamics, identifying high-variance gradient, entropy-induced randomness, and susceptibility to spurious local optima as key factors leading to hallucinations. To address this drawback, we propose Factuality-aware Step-wise Policy Optimization (\textbf{FSPO}), an innovative RL fine-tuning algorithm incorporating explicit factuality verification at each reasoning step. FSPO leverages automated verification against given evidence to dynamically adjust token-level advantage values, incentivizing factual correctness throughout the reasoning process. Experiments across mathematical reasoning and hallucination benchmarks using Qwen2.5 and Llama models demonstrate that FSPO effectively reduces hallucinations while enhancing reasoning accuracy, substantially improving both reliability and performance.

\begin{center}
    \begin{tabular}{rcl}
         \faGithub & \url{https://github.com/nusnlp/FSPO}
    \end{tabular}
\end{center}

\end{abstract}

\section{Introduction}

Large language models (LLMs) have recently achieved remarkable breakthroughs in natural language generation and complex reasoning tasks~\cite{deepseek-r1,openai-o1}. This progress is fueled by new training paradigms that imbue LLMs with reasoning capabilities, including fine-tuning on long chain-of-thought (CoT) solutions~\cite{limo} and reinforcement learning (RL) based optimization~\cite{deepseek-r1}. By leveraging these approaches, state-of-the-art models can decompose complex problems into reasoning steps and iteratively refine their answers, yielding impressive results on challenging benchmarks in mathematics~\cite{hendrycksmath2021}, multi-hop question answering~\cite{hotpotqa}, coding~\cite{humaneval}, and decision-making~\cite{alphago}.

Along with improved reasoning ability, a critical question has emerged: \emph{To what extent do reasoning models maintain factuality after RL fine-tuning?} In this work, our empirical observations indicate that when LLMs are trained with outcome-driven RL, they tend to produce more factually incorrect or fabricated statements (i.e., hallucinations~\cite{hallucination_survey,Zhang-CoRR-2023-Siren}) during their reasoning process. We find that these hallucinations usually stem from incorrect intermediate reasoning, i.e., the model may generate unsupported or false claims in the chain-of-thought, even if its final answer is occasionally correct. Indeed, RL-aligned models can be highly fluent and confident in their step-by-step explanations while subtly introducing factual errors~\cite{yang2025kongzi}. This issue undermines the reliability of RL-trained reasoning models, as a flawed or untruthful answer is problematic for trust and interpretability. 

To understand why outcome-based RL fine-tuning exacerbates hallucinations, we conduct a theoretical analysis of the RL training dynamics for reasoning tasks. Our analysis identifies three primary factors. First, optimizing solely for correct final answers leads to extremely high variance in the policy gradient when such answers are rare, yielding unstable training updates. Second, the need to discover rewarding outputs forces the policy to maintain a high level of entropy in its predictions, heightening the risk of hallucinations. Third, standard RL objectives admit spurious local optima whereby the model can converge to a confidently held wrong answer that yields zero reward. These three factors, i.e., high-variance gradient, entropy-induced randomness, and spurious local optima, together explain why a purely outcome-based RL approach can cause a reasoning model to hallucinate and struggle to learn faithful reasoning patterns.

To mitigate these issues, we propose \textbf{F}actuality-aware \textbf{S}tep-wise \textbf{P}olicy \textbf{O}ptimization (\textbf{FSPO}), a novel RL fine-tuning algorithm that explicitly incorporates factuality feedback at each reasoning step. The key idea is to adjust the advantage values of individual tokens using a step-wise factuality reward signal, encouraging LLMs to generate more factual content. Specially, we employ an automated verifier to check whether each generated reasoning sentence is entailed by the given evidence, yielding step-wise factuality scores. These scores are then integrated into the overall reward signal to adjust the advantages of tokens during optimization, rewarding factually correct tokens while penalizing incorrect tokens. This factuality-aware reward shaping provides denser and more informative feedback to the policy, addressing the sparse-signal problem and reducing training instability. FSPO steers the policy toward solutions that are not only correct but also generated via faithful, verifiable chains of reasoning, thereby directly mitigating hallucinations in the RL training process.

We conduct extensive experiments to evaluate the effectiveness of FSPO on both mathematical reasoning and hallucination benchmarks using Qwen2.5-7B-Base/Instruct and Llama-3.1-8B-Instruct. The results show that our method achieves significantly improved mathematical reasoning performance and reduced hallucinations. Further detailed analyses indicate that our method can enhance the factuality of reasoning steps without compromising the quality of generated content. 

\begin{figure*}[t]
    \centering
    \includegraphics[width=\textwidth]{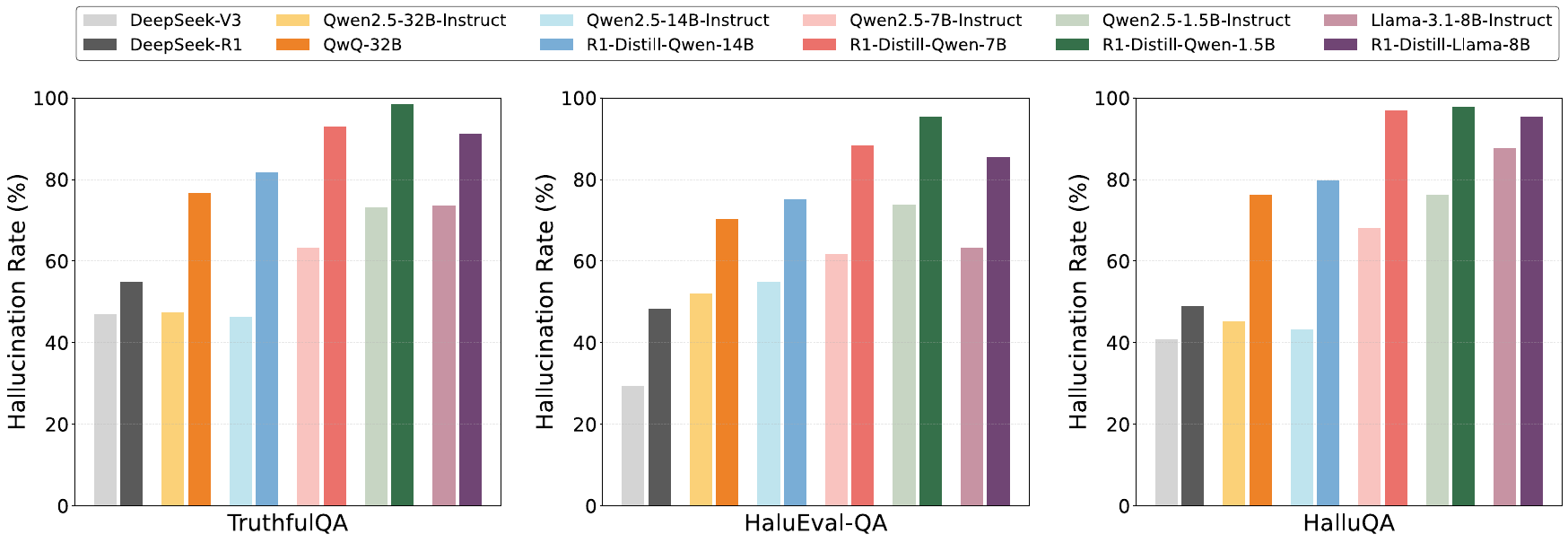}
    \caption{Hallucination rate (\%) for six groups of twelve LLMs across three benchmarks.} 
    \label{fig:analysis}
\end{figure*}

\section{Preliminary Experiments}

To explore the effect of RL on hallucinations, we conduct preliminary experiments based on several LLMs and hallucination benchmarks. 

\paratitle{Models and Datasets.}
For comparative analysis, we select several groups of LLMs without and with large-scale RL training: (1) \textit{DeepSeek-V3}~\citep{deepseek-v3} and \textit{DeepSeek-R1}~\citep{deepseek-r1}; (2) \textit{Qwen2.5-32B-Instruct}~\citep{yang2024qwen2} and \textit{QwQ-32B}~\citep{qwq32b}; (3) \textit{Qwen2.5-14B/7B/1.5B-Instruct}~\citep{yang2024qwen2} and \textit{R1-Distill-Qwen-14B/7B/1.5B}~\citep{deepseek-r1}; (4) \textit{Llama3.1-8B-Instruct}~\citep{grattafiori2024llama} and \textit{R1-Distill-Llama-8B~\citep{deepseek-r1}}. 
These distilled models are fine-tuned on the synthesized long-CoT data from DeepSeek-R1, thus exhibiting similar thinking modes to RL-tuned models.
For the hallucination datasets, we select three benchmarks, namely \textit{TruthfulQA}~\citep{truthfulqa}, \textit{HaluEval}~\citep{halueval}, and \textit{HalluQA}~\citep{halluqa}. In TruthfulQA, we adopt the generation task, where the model is prompted to generate an answer to a question, and then utilize a fine-tuned GPT-3 to predict the truthfulness of the answer. The hallucination rate is equal to the ratio of untruthful answers. HalluQA follows the same method as TruthfulQA, except that it employs GPT-4 as the evaluator. For HaluEval, we adopt the QA subset, where the LLM is provided with a question and two candidate answers and it needs to assess which answer is factually correct. The hallucination rate refers to the ratio of false judgments made by the model.

\begin{wrapfigure}{r}{0.3\textwidth}
  \centering
  \includegraphics[width=0.28\textwidth]{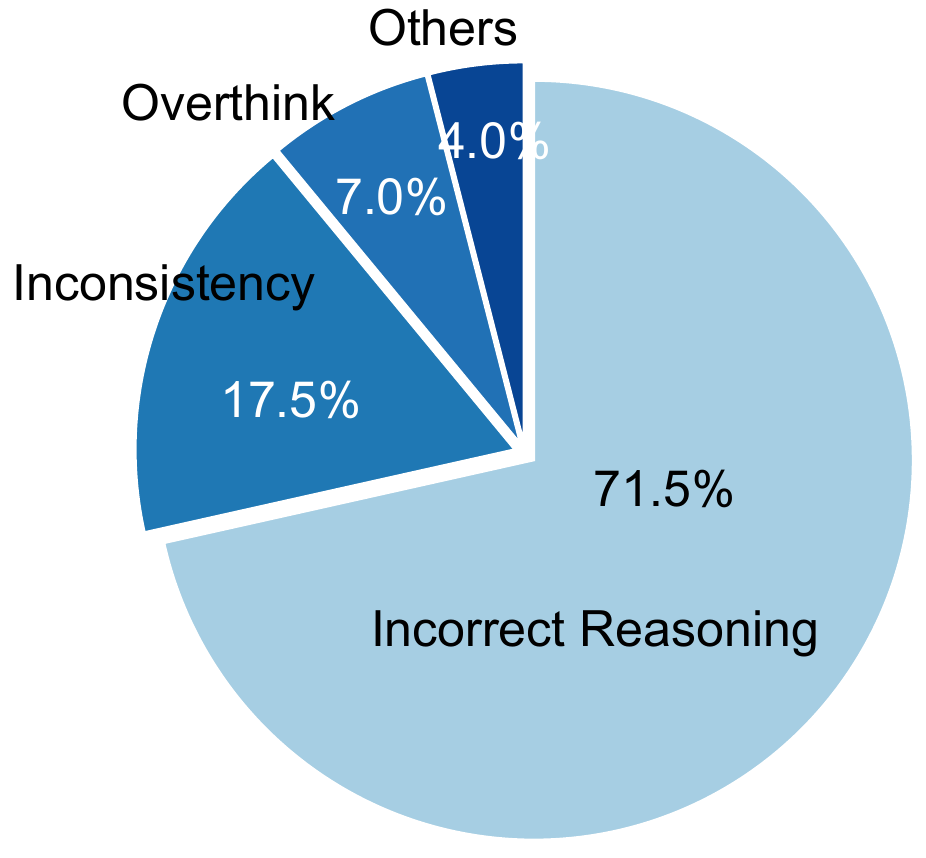}
  \caption{Distribution of hallucination causes over 200 samples from HaluEval-QA.}
  \label{fig:cause-pie}
\end{wrapfigure}

\paratitle{Results and Analysis.}
The results are shown in Figure~\ref{fig:analysis}. We can clearly observe that after large-scale training with RL or long CoT data, models demonstrate dramatically higher degree of hallucinations across the three benchmarks. 
To further investigate the underlying causes of this phenomenon, we randomly select 200 samples from HaluEval-QA that DeepSeek-V3 answers correctly while DeepSeek-R1 does not. We manually examine these cases and classify the source of errors (refer to Appendix~\ref{app-preliminary} for more details). In Figure~\ref{fig:cause-pie}, we can see that most errors stem from incorrect reasoning steps. We hypothesize that this may be attributed to the outcome reward modeling, where rewards and penalties are assigned solely based on the correctness of the final answer, while ignoring signals from intermediate reasoning steps. In the following sections, we will provide a theoretical analysis of how outcome-based reward modeling in RL could lead to hallucinations, and then propose the factuality-aware step-wise RL algorithm to mitigate this issue.


\section{Background}

\paratitle{Reinforcement Learning.} In reinforcement learning, language model generation can be formulated as a token-level Markov decision process, denoted by $\mathcal{M}=\{\mathcal{S}, \mathcal{A}, \mathcal{R}, \pi_\theta\}$. At the $t$-th generation step, the state $s_t \in \mathcal{S}$ is the concatenation of the input prompt $x$ and the generated sequence so far $y_{<t} = [o_1,...,o_{t-1}]$. The policy $\pi_\theta(\cdot|s_t)$ (parameterized by $\theta$) selects the next action $o_t \in \mathcal{A}$ by generating a token from the vocabulary $\mathcal{A}$, leading to a deterministic transition to the next state $s_{t+1} = [s_t, o_t]$. Let $Y^*$ denote the set of correct answers for a given input prompt $x$. Existing research mostly adopts a binary reward $\mathcal{R}$ for each output $y$, which is sparse and discontinuous~\citep{alphago,ladosz2022exploration}. For simplicity, we consider a prevalent format in this paper: $\mathcal{R}(y)=1$ when $y \in Y^*$, or $\mathcal{R}(y)=0$ when $y \notin Y^*$. The expected return (reward) under the policy $\pi_{\theta}$ is: 
\begin{equation}
    \mathcal{J}(\theta) = \mathbb{E}_{y \sim \pi_{\theta}(\cdot|x)}[\,\mathcal{R}(y)\,] = \sum\nolimits_y \mathcal{R}(y) \pi_\theta(y|x).
\end{equation}
With the binary reward, the expected return is essentially the probability of producing a correct output under the current policy $\pi_\theta$. In the context of RL, we can update the policy using policy gradient algorithms. According to the policy gradient, the gradient of this objective can be written as: 
\begin{equation}
    \nabla_{\theta} \mathcal{J}(\theta) = \mathbb{E}_{y \sim \pi_{\theta}(\cdot|x)}\!\big[ \mathcal{R}(y)\,\nabla_{\theta} \log \pi_{\theta}(y|x) \big].
\end{equation}
The policy gradient methods maximize the expected total reward by repeatedly estimating the gradient. Several policy gradient algorithms have been proposed such as Proximal Policy Optimization~\citep{schulman2017proximal}.

\paratitle{Group Relative Policy Optimization.} In DeepSeek-Math~\citep{deepseek-math}, researchers proposed a new policy gradient method, Group Relative Policy Optimization (GRPO), which improves the effectiveness and efficiency of RL training based on Proximal Policy Optimization (PPO)~\citep{schulman2017proximal}. The PPO algorithm maximizes the following surrogate objective:
\begin{equation}\small
    \mathcal{J}_\text{PPO}(\theta)=\mathbb{E}_{y \sim \pi_{\theta}(\cdot|x)} \sum_{t=1}^{|y|}\left\{\min\left[
\frac{\pi_{\theta}(o_t|x,y_{<t})}{\pi_{\theta_{\text{old}}}(o_t|x,y_{<t})}A_t,\text{clip}\left(
\frac{\pi_{\theta}(o_t|x,y_{<t})}{\pi_{\theta_{\text{old}}}(o_t|x,y_{<t})},1-\epsilon,1+\epsilon
\right)A_t
\right]\right\},
\end{equation}
where $\pi_{\theta_\text{old}}$ is the policy before the update, $\epsilon$ is the clip hyperparameter, and $A_t$ is the advantage of the $t$-th token. In PPO, a value model is learned to estimate the advantage based on the reward $\mathcal{R}$, which is computationally expensive. Thus, GRPO uses the average reward of multiple sampled outputs as baseline in advantage estimation. Specifically, for each input $x$, GRPO samples a group of outputs $\{y_1,...,y_G\}$ and computes their rewards $\{\mathcal{R}_1,...,\mathcal{R}_G\}$, then sets the advantage of all tokens from $y_i$ as $A_{i,t} = {\left(\mathcal{R}_i - \text{mean}(\{\mathcal{R}_1,...,\mathcal{R}_G\})\right)}/{\text{std}(\{\mathcal{R}_1,...,\mathcal{R}_G\})}$. The GRPO objective can be written as:
\begin{equation}\small\label{eq-grpo}
    \begin{aligned}
        \mathcal{J}_\text{GRPO}(\theta) &= \mathbb{E}_{y \sim \pi_\theta(\cdot|x)} \\
        \frac{1}{G}\sum_{i=1}^{G} & \frac{1}{|y_i|}\sum_{t=1}^{|y_i|}\left\{
\min\left[
\frac{\pi_{\theta}(o_{i,t}|x,y_{i,<t})}{\pi_{\theta_\text{old}}(o_{i,t}|x,y_{i,<t})}A_{i,t},
\text{clip}\left(
\frac{\pi_{\theta}(o_{i,t}|x,y_{i,<t})}{\pi_{\theta_\text{old}}(o_{i,t}|x,y_{i,<t})},
1-\epsilon, 1+\epsilon
\right)A_{i,t}
\right]
\right\}.
    \end{aligned}
\end{equation}
Compared to PPO, GRPO foregoes the value model to estimate advantages that is typically the same size as the policy model, relieving the memory and computational burden.


\section{Theoretical Analysis}
\label{sec-theoretical-analysis}

In this section, we present a theoretical analysis that characterizes how the simplified and prevalent binary reward format (i.e., 1/0) affects the learning dynamics and induces hallucinations. The analysis can extend to any arbitrary bounded real rewards.

\begin{theorem}[Variance of Policy Gradient in Binary Reward RL]\label{theorem-1}
Under a binary reward function, the variance of the policy gradient estimator is directly proportional to the probability of generating correct answers. Let $p$ denote the probability of producing a correct answer, so the variance of the policy gradient estimator is on the order of $p(1-p)\|\nabla_{\theta}\log \pi_{\theta}(y^*|x)\|^2$ (where $y^*$ is a correct answer). For small $p$, this variance is extremely high, hindering stable learning. 
\end{theorem}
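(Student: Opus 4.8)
The plan is to treat the single-sample REINFORCE estimator $g(y) = \mathcal{R}(y)\,\nabla_\theta \log \pi_\theta(y|x)$ as a random vector, where $y \sim \pi_\theta(\cdot|x)$, and compute (or bound) its variance $\mathrm{Var}(g) = \mathbb{E}[\|g\|^2] - \|\mathbb{E}[g]\|^2$. Since the binary reward satisfies $\mathcal{R}(y) = \mathbf{1}[y \in Y^*]$, we have $\mathcal{R}(y)^2 = \mathcal{R}(y)$, so the second moment collapses to $\mathbb{E}[\|g\|^2] = \mathbb{E}_{y \sim \pi_\theta}\big[\mathcal{R}(y)\,\|\nabla_\theta \log \pi_\theta(y|x)\|^2\big] = \sum_{y \in Y^*} \pi_\theta(y|x)\,\|\nabla_\theta \log \pi_\theta(y|x)\|^2$. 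First I would write $p = \sum_{y \in Y^*}\pi_\theta(y|x) = \Pr[y \in Y^*]$, and note that $\|\mathbb{E}[g]\|^2 = \|\nabla_\theta \mathcal{J}(\theta)\|^2 = \big\|\sum_{y \in Y^*}\pi_\theta(y|x)\nabla_\theta\log\pi_\theta(y|x)\big\|^2$, which by Jensen/convexity is at most $p \sum_{y\in Y^*}\pi_\theta(y|x)\|\nabla_\theta\log\pi_\theta(y|x)\|^2 / p \cdot p = \ldots$ — more cleanly, $\|\mathbb{E}[g]\|^2 \le p\,\mathbb{E}[\|g\|^2/\,?]$; the honest move is just to observe the mean-square term is $O(p^2)$ in magnitude when the per-example gradient norms are comparable, so it is lower order than the $O(p)$ second moment for small $p$.

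To get the stated $p(1-p)$ form exactly, I would specialize to the instructive case where the correct-answer mass is concentrated on (or dominated by) a single sequence $y^*$, so $\mathcal{R}(y)\nabla_\theta\log\pi_\theta(y|x)$ is effectively $\mathbf{1}[y=y^*]\,\nabla_\theta\log\pi_\theta(y^*|x)$. Then $g$ is a Bernoulli$(p)$ random variable times the fixed vector $v := \nabla_\theta\log\pi_\theta(y^*|x)$, and a direct computation gives $\mathrm{Var}(g) = \mathbb{E}[\mathbf{1}[y=y^*]^2]\|v\|^2 - (\mathbb{E}[\mathbf{1}[y=y^*]])^2\|v\|^2 = (p - p^2)\|v\|^2 = p(1-p)\,\|\nabla_\theta\log\pi_\theta(y^*|x)\|^2$, which is exactly the claimed order. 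I would then remark that in the general multi-correct-answer case the same $\Theta(p)$ scaling of the dominant second-moment term holds, with $\|\nabla_\theta\log\pi_\theta(y^*|x)\|^2$ replaced by a $\pi_\theta$-weighted average of squared log-gradient norms over $Y^*$, so the "on the order of" phrasing in the statement is what is actually being asserted.

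Finally I would close by reading off the asymptotics: as $p \to 0^+$, $p(1-p) \to 0$ for the variance itself, but the relevant quantity for learning stability is the \emph{relative} noise, i.e. the ratio of the gradient standard deviation to the gradient mean, which scales like $\sqrt{p(1-p)}\,\|v\| / (p\,\|v\|) = \sqrt{(1-p)/p} \to \infty$; equivalently, one needs $\Theta(1/p)$ samples for the Monte Carlo estimate of $\nabla_\theta\mathcal{J}$ to have a signal-to-noise ratio bounded below, which blows up when correct answers are rare. I expect the main obstacle — really the main modeling choice rather than a technical difficulty — is justifying the reduction from the sum $\sum_{y\in Y^*}$ to a single representative $y^*$: the clean $p(1-p)\|\nabla_\theta\log\pi_\theta(y^*|x)\|^2$ identity is exact only under concentration of correct-answer mass, and in the general case one must either (i) bound cross terms $\sum_{y \ne y' \in Y^*}\pi_\theta(y|x)\pi_\theta(y'|x)\langle\nabla\log\pi_\theta(y|x),\nabla\log\pi_\theta(y'|x)\rangle$ in the $\|\mathbb{E}[g]\|^2$ expansion, or (ii) explicitly invoke a "dominant mode" assumption; I would state which route the paper takes and keep the rest as the short Bernoulli-variance computation above.
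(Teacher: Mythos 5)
Your proposal is correct and follows essentially the same route as the paper: the paper's proof is exactly your Bernoulli-times-fixed-vector computation, treating the correct answer as a single representative $y^*$ so that $g$ equals $v=\nabla_\theta\log\pi_\theta(y^*|x)$ with probability $p$ and $0$ otherwise, yielding $\mathbb{E}[\|g\|^2]-\|\mathbb{E}[g]\|^2=p(1-p)\|v\|^2$. The two points on which you are more careful than the paper --- flagging that the exact $p(1-p)$ identity requires collapsing $Y^*$ to a dominant mode (the paper asserts the single-$y^*$ reduction without comment), and observing that for small $p$ the absolute variance actually vanishes while it is the relative noise $\sqrt{(1-p)/p}$ that diverges --- are both handled only informally in the paper, which addresses the latter merely by the prose remark that the variance is ``extremely high relative to its mean.''
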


\begin{proof}
The REINFORCE gradient for a given input is $g = R(y)\,\nabla_{\theta} \log \pi_{\theta}(y|x)$. This is a random vector depending on the sampled output $y$. There is a $p$ chance that the sampled output is correct (yielding $R=1$) and a $1-p$ chance that it is wrong (yielding $R=0$). Hence, the gradient $g$ is zero with probability $1-p$, and equals $\nabla_{\theta}\log \pi_{\theta}(y^*|x)$ with probability $p$. The expected gradient is $\mathbb{E}[g] = p\,\nabla_{\theta}\log \pi_{\theta}(y^*|x)$. The second moment is $\mathbb{E}[\|g\|^2] = p\,\|\nabla_{\theta}\log \pi_{\theta}(y^*|x)\|^2$. Therefore, the covariance (and in particular the trace as a measure of variance magnitude) is on the order of 
\begin{equation}
    p\,\|\nabla_{\theta}\log \pi_{\theta}(y^*|x)\|^2 - \|p\,\nabla_{\theta}\log \pi_{\theta}(y^*|x)\|^2 = p(1-p)\,\|\nabla_{\theta}\log \pi_{\theta}(y^*|x)\|^2.
\end{equation}
For very small $p$, the factor $1-p$ is near 1, so roughly $\text{Var}(g) \approx p \,\|\nabla \log \pi_{\theta}(y^*)\|^2$. In other words, when the model has low chance to generate correct answers, the gradient estimator is almost always zero but occasionally a large spike, yielding extremely high variance relative to its mean. This high variance makes the training updates very unstable, as gradient estimates will fluctuate widely between zero and large values. We conclude that, with a binary reward, most of the time the model receives no learning signal, and when it does, the signal is an outsized jump.
\end{proof}

\begin{theorem}[Exploration-Exploitation Trade-off and Hallucination Risk] Given the high-variance gradient under a binary reward, maximizing the expected binary reward objective $\mathcal{J}(\theta)$ will enforce an implicit constraint that the entropy $H_\theta(x)$ remains sufficiently large. Let $\epsilon > 0$ be a small constant reflecting minimum acceptable exploration probability. To avoid stagnation at incorrect local optima without gradient feedback, the optimal policy update rule must satisfy: $H_\theta(x) \geq H_\text{min}(\epsilon) > 0$.

\end{theorem}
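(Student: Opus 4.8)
The plan is to argue against stagnation and convert ``no stagnation'' into an entropy floor. Concretely, I would show that whenever the entropy $H_\theta(x)$ drops below a fixed threshold while the policy's mass is concentrated on an incorrect output, the probability $p$ of sampling a correct answer is squeezed toward zero; then, by Theorem~\ref{theorem-1}, both the mean gradient $p\,\nabla_\theta\log\pi_\theta(y^\ast|x)$ and its variance $p(1-p)\|\nabla_\theta\log\pi_\theta(y^\ast|x)\|^2$ become negligible, so no usable update direction remains and the optimizer is frozen at that incorrect configuration. Contrapositively, any update rule that keeps making progress on $\mathcal{J}(\theta)$ — \ie that escapes incorrect configurations — must preserve $p \ge \epsilon$ for some minimum exploration probability $\epsilon > 0$, and the remaining task is to turn $p \ge \epsilon$ into a quantitative lower bound on $H_\theta(x)$.

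The first technical step is a one-line entropy--mode inequality. Writing $\pi_{\max} = \max_y \pi_\theta(y|x)$, from $H_\theta(x) = \sum_y \pi_\theta(y|x)\log\frac{1}{\pi_\theta(y|x)} \ge \sum_y \pi_\theta(y|x)\log\frac{1}{\pi_{\max}} = \log\frac{1}{\pi_{\max}}$ one gets $\pi_{\max} \ge e^{-H_\theta(x)}$, so a small entropy forces almost all mass onto the most probable output $\hat y$. The second step localizes this to the failure mode: if $\hat y$ is \emph{not} a correct answer — the ``incorrect local optimum'' of interest, which is exactly a confidently asserted wrong statement, \ie a hallucination — then the correct-answer mass obeys $p = \sum_{y^\ast \in Y^\ast}\pi_\theta(y^\ast|x) \le 1 - \pi_{\max} \le 1 - e^{-H_\theta(x)}$.

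Combining the two steps with the exploration requirement $p \ge \epsilon$ gives $1 - e^{-H_\theta(x)} \ge \epsilon$, \ie $H_\theta(x) \ge \log\frac{1}{1-\epsilon} =: H_{\min}(\epsilon) > 0$, which is strictly positive for any $\epsilon \in (0,1)$. Hence an entropy floor is not an optional regularizer here: any policy-update rule that avoids gradient-starved stagnation at an incorrect answer is forced to keep $H_\theta(x) \ge H_{\min}(\epsilon)$, so maximizing $\mathcal{J}(\theta)$ implicitly imposes the claimed constraint. I would close by observing that this residual stochasticity is precisely what keeps an RL-tuned reasoner liable to emit unsupported intermediate claims — the tension the paper is named for.

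The step I expect to be the main obstacle is making the direction ``$p$ small $\Rightarrow$ trouble'' both honest and quantitative: the inequality $p \le 1 - e^{-H_\theta(x)}$ only uses that the mode is wrong, so a large $H_\theta(x)$ is \emph{necessary} but not \emph{sufficient} for $p \ge \epsilon$, since the extra entropy could in principle sit entirely on other incorrect outputs. To obtain a clean two-sided statement I would add a mild coverage/anti-concentration assumption — \eg that the non-mode mass overlaps $Y^\ast$ at least in proportion to its relative size in the output space, $p \gtrsim (1-\pi_{\max})\cdot\big(|Y^\ast|/|\text{supp}\,\pi_\theta|\big)$, or simply that $\pi_\theta$ retains positive support on $Y^\ast$ throughout training — and fold the corresponding factor into $H_{\min}(\epsilon)$. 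This is also the natural place to note that entropy-regularized and KL-constrained RL objectives enforce exactly such a floor by construction, consistent with the ``implicit constraint'' reading of the theorem.
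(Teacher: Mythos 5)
Your proposal is sound and proves the claimed (necessity) direction, but it fills in the key step differently from the paper. The paper's proof formalizes exploration as a per-output floor $\pi_\theta(y|x) \geq \epsilon$ for every $y \in Y_\text{unexplored}$ and then simply asserts that maintaining this constraint ``implies a sufficiently high entropy'' because low-entropy distributions concentrate mass on few outputs; it never exhibits an explicit $H_\text{min}(\epsilon)$, and it spends the rest of the argument on the qualitative trade-off (entropy must stay high to find rewarding outputs, yet reward maximization pushes mass onto already-known answers, and the residual spread is what admits hallucinated outputs). You instead restrict attention to the stagnation regime of Theorem~\ref{theorem-3} (mode on an incorrect answer), formalize exploration as an aggregate floor $p \geq \epsilon$ on the correct-answer mass, justify via Theorem~\ref{theorem-1} why $p$ small means no usable update, and then obtain a concrete threshold $H_\text{min}(\epsilon) = \log\frac{1}{1-\epsilon}$ from the standard mode--entropy bound $\pi_{\max} \geq e^{-H_\theta(x)}$ combined with $p \leq 1-\pi_{\max}$. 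Your route is sharper: it turns the paper's informal implication into an explicit inequality, and your honest caveat that high entropy is necessary but not sufficient for $p \geq \epsilon$ (the extra mass could sit on other wrong outputs) is a real gap in the converse that the paper's wording glosses over, and which only matters if one wants a two-sided statement rather than the stated constraint. What the paper's version buys is generality of the exploration notion (a floor on all unexplored outputs, not just correct ones) and the explicit articulation of the exploration--exploitation tension that motivates the hallucination-risk half of the theorem's title, which your proof mentions only in closing.
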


\begin{proof}
The goal of policy model is to increase the probability of producing correct answers. In early stages or under uncertain knowledge (i.e., $p$ is small), gradient updates are extremely noisy, often zero or very large (Theorem~\ref{theorem-1}). To reliably find rewarding outputs, the agent must ensure it samples broadly from its policy distribution. Given $\epsilon > 0$ the minimal exploration threshold of ensuring a non-negligible probability of sampling potentially correct outputs, the exploration constraint is:
\begin{equation}
    \pi_\theta(y|x) \geq \epsilon, \quad \forall y \in Y_\text{unexplored},
\end{equation}
where $Y_\text{unexplored}$ is a subset of unexplored outputs that have unknown correctness. Maintaining such a constraint implies a sufficiently high entropy $H_\theta(x)$, because low entropy distributions concentrate their mass on few outputs, increasing the risk that these outputs yield zero reward consistently:
\begin{equation}
    H_\theta(x)= -\sum \pi(y|x)\log\pi(y|x) \geq H_\text{min}(\epsilon) > 0 .
\end{equation}
The necessity for exploration to ensure non-stagnation at zero-reward local minima introduces the exploration-exploitation trade-off. On one hand, the entropy must remain relatively high to explore potential answers, increasing the probability of hallucinated outputs. 
On the other hand, maximizing immediate reward requires concentrating distribution on already known rewarding answers.
\end{proof}


\begin{theorem}[Local Optima and Incorrect Convergence]\label{theorem-3} 
Binary reward modeling admits spurious local optima where the policy produces incorrect answers with high confidence and receives zero reward, yet has no gradient incentive to change. Any policy that deterministically outputs a particular wrong answer is a stationary point of the optimization, i.e., once the model heavily commits to a wrong answer, the policy gradient provides no immediate push away from that policy.
\end{theorem}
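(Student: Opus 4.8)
The plan is to show directly that the policy gradient vanishes at any policy that (essentially) places all of its mass on a single incorrect answer $y' \notin Y^*$. The first step is an algebraic simplification: because the binary reward satisfies $\mathcal{R}(y) = 1$ exactly when $y \in Y^*$ and $\mathcal{R}(y) = 0$ otherwise, and because $\pi_\theta(y|x)\,\nabla_\theta \log \pi_\theta(y|x) = \nabla_\theta \pi_\theta(y|x)$, differentiating $\mathcal{J}(\theta) = \sum_y \mathcal{R}(y)\,\pi_\theta(y|x)$ gives
\begin{equation}
\nabla_\theta \mathcal{J}(\theta) = \sum_{y \in Y^*} \nabla_\theta \pi_\theta(y|x) = \nabla_\theta\, p(\theta), \qquad p(\theta) := \sum_{y \in Y^*} \pi_\theta(y|x) \in [0,1],
\end{equation}
so the stationary points of $\mathcal{J}$ are exactly those of the total correct-answer probability $p$ (the same quantity $p$ featured in Theorem~\ref{theorem-1}).

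The second step identifies a deterministic wrong policy as a global minimizer of $p$. Since $p(\theta) \geq 0$ everywhere while a policy that deterministically emits $y' \notin Y^*$ achieves $p(\theta) = 0$, first-order optimality immediately yields $\nabla_\theta p(\theta) = 0$, hence $\nabla_\theta \mathcal{J}(\theta) = 0$, provided $p$ is differentiable there. I would also give the direct termwise version, which is cleaner about the boundary issue: each summand $\nabla_\theta \pi_\theta(y|x) = \pi_\theta(y|x)\,\nabla_\theta \log \pi_\theta(y|x)$ with $y \in Y^*$ carries a vanishing prefactor $\pi_\theta(y|x)$, and for the standard softmax head $\pi_\theta(y|x) \propto \exp(z_y(\theta))$ one has $\nabla_\theta \pi_\theta(y|x) = \pi_\theta(y|x)\big(\nabla_\theta z_y - \mathbb{E}_{y'' \sim \pi_\theta}[\nabla_\theta z_{y''}]\big)$, where the bracket stays bounded whenever the logit gradients are bounded; letting $\pi_\theta(y'|x) \to 1$ drives every such term, and therefore the whole sum, to $0$. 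Together with $\mathcal{R}(y') = 0$ this establishes the claim: the model collects no reward yet experiences no gradient pull toward any $y^* \in Y^*$, so the wrong-answer policy is a (zero-reward) stationary point.

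The final step is to interpret why this stationary point is a genuine trap rather than an artifact: at (or near) such a $\theta$ the update direction has vanishing magnitude, so plain gradient ascent stalls on this zero-reward plateau, and to first order so do its clipped/normalized variants such as GRPO in Eq.~\eqref{eq-grpo}, whose advantages $A_{i,t}$ are built from the same $\mathcal{R}$ and collapse once all sampled $y_i$ share reward $0$; the only force that can dislodge the policy is the exploration/entropy pressure of the previous theorem, and only if the threshold $\epsilon$ is large enough. I expect the main obstacle to be precisely the boundary nature of deterministic policies: a softmax never literally realizes a point mass, so the crisp ``$p = 0 \Rightarrow \nabla_\theta p = 0$'' argument must be upgraded to the limiting statement that $\|\nabla_\theta \mathcal{J}(\theta_k)\| \to 0$ along every sequence $\theta_k$ with $\pi_{\theta_k}(y'|x) \to 1$. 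This requires controlling the competition between $\pi_{\theta_k}(y^*|x) \to 0$ and the potential blow-up of the score norm $\|\nabla_\theta \log \pi_{\theta_k}(y^*|x)\|$; I would resolve it by restricting to parametrizations with uniformly bounded logit gradients (as in the softmax computation above) and stating that assumption explicitly, rather than chasing pathological parametrizations.
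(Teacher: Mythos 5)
Your proposal is correct, and its core is the same as the paper's: the gradient of $\mathcal{J}$ is supported only on correct answers, all of which carry zero probability under a policy committed to a wrong answer, so the gradient vanishes while the reward is zero, leaving the policy on a flat zero-reward plateau that reward-driven updates (REINFORCE, GRPO with collapsed advantages) cannot escape --- your closing interpretation matches the paper's discussion of ``spurious local optima'' and of exploration as the only escape route. Where you go beyond the paper is in rigor. The paper argues through the score-function form $\nabla_\theta\mathcal{J}=\mathbb{E}_{y\sim\pi_\theta}[\mathcal{R}(y)\nabla_\theta\log\pi_\theta(y|x)]$ and simply observes that every sampled reward is zero; this glosses over the fact that the likelihood-ratio identity degenerates exactly where $\pi_\theta(y^*|x)=0$, and that a softmax head never literally realizes a point mass, so ``deterministic'' is a boundary limit rather than an attained parameter. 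Your two additions repair precisely this: (i) rewriting $\nabla_\theta\mathcal{J}=\nabla_\theta p$ and invoking first-order optimality of the global minimizer $p=0$ gives the stationarity claim cleanly whenever the deterministic wrong policy is exactly realizable, and (ii) the termwise softmax computation $\nabla_\theta\pi_\theta(y|x)=\pi_\theta(y|x)\big(\nabla_\theta z_y-\mathbb{E}_{y''\sim\pi_\theta}[\nabla_\theta z_{y''}]\big)$, under bounded logit gradients, upgrades the statement to $\|\nabla_\theta\mathcal{J}(\theta_k)\|\to 0$ along any sequence with $\pi_{\theta_k}(y'|x)\to 1$, which is the form actually relevant to a softmax-parameterized LLM. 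The explicit bounded-logit-gradient assumption is a reasonable price for making the theorem's informal ``heavily commits'' precise; the paper leaves this implicit.
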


\begin{proof}
Given the objective $\mathcal{J}(\theta) = \sum \mathcal{R}(y) \pi_{\theta}(y | x)$, it is maximized at $\mathcal{J}=1$ when the policy outputs a correct answer with probability 1. However, when a policy is certain but wrong for some incorrect answers $y_{\text{false}}$, the probability $\pi_{\theta}(y_{\text{false}}|x) = 1$ (so $\pi_{\theta}(y^*|x) = 0$ for all correct answers $Y^*$). For this policy, the expected reward $\mathcal{J}(\theta) = 0$. Importantly, the gradient $\nabla_{\theta} \mathcal{J}(\theta)$ is zero as well, since $\nabla_{\theta} \mathcal{J} = \mathbb{E}[\mathcal{R}(y)\nabla \log \pi_{\theta}(y|x)]$ and $\mathcal{R}(y)$ is always 0 under this policy (only generating incorrect answers). Thus, $\theta$ is a stationary point (the gradient is zero and no update is favored). 

In a strict mathematical sense, the policy of always outputting a particular wrong answer is actually a local minimum of the objective $\mathcal{J}(\theta)$, since any small increase in the probability of a correct answer would increase $\mathcal{J}(\theta)$ slightly. However, the surface around it is a broad, flat plateau at reward zero for every direction not immediately introducing the correct answer. Since REINFORCE updates are driven solely by sampled positive rewards, the algorithm has no gradient signal to push the parameters off this plateau. In effect, the training has converged to a policy that outputs a wrong answer with full confidence, and because it never explores the correct answer, it gets no feedback. We call such states spurious local optima: the model is confidently wrong and training fails to progress.
\end{proof}

\section{Approach}

\begin{figure*}[t]
    \centering
    \includegraphics[width=\textwidth]{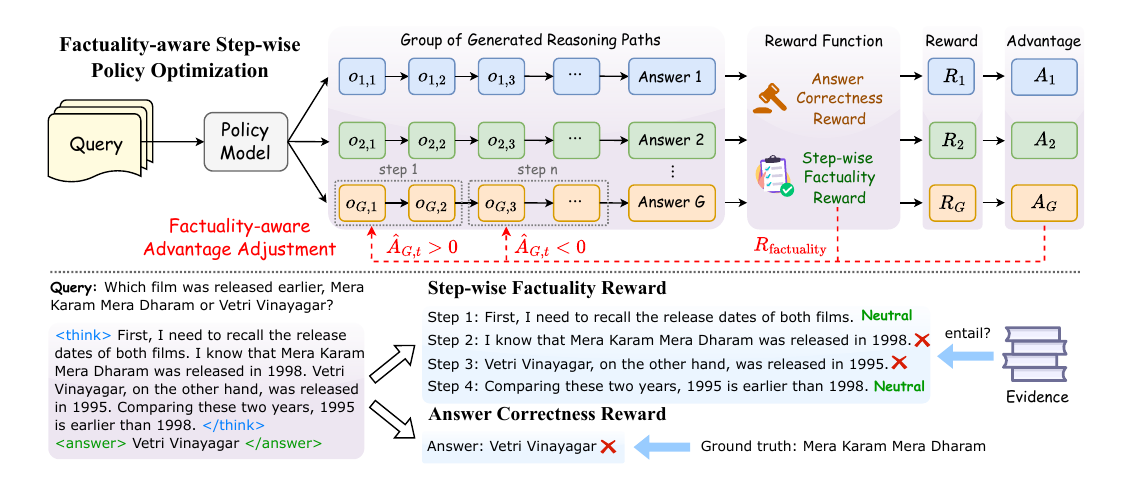}
    \caption{Overall framework of our proposed FSPO approach.} 
    \label{fig:model}
\end{figure*}

To mitigate the hallucination issue during large-scale RL training, we propose the \textbf{F}actuality-aware \textbf{S}tep-wise \textbf{P}olicy \textbf{O}ptimization (\textbf{FSPO}) algorithm. As shown in Figure~\ref{fig:model}, by integrating step-wise factuality signals into the policy optimization process, our approach encourages the model to generate more factual and faithful reasoning steps.

\subsection{Reward Function Design}


The core of reinforcement learning is to design appropriate reward functions that encourage high-quality reasoning trajectories and penalizes low-quality ones, guiding the policy model towards better responses using the RL algorithm. In our method, we introduce two rewards, i.e., answer correctness reward and step-wise factuality reward.

\paratitle{Step-wise Factuality Reward.} Typically, factuality is mainly related to world knowledge, so we focus on complex knowledge-intensive tasks in this work, e.g., multi-hop question answering~\citep{hotpotqa}. Specifically, for an input question $x$, we assume that it can be associated with knowledge snippets in a source such as Wikipedia, denoted by $\mathcal{K}$, which can be called \textit{evidence}. We first split the reasoning text into multiple sentences as steps, denoted by $\{z_1,...,z_N\}$, and then employ an LLM to determine whether the evidence $\mathcal{K}$ can entail each sentence $z_j$ to assign a step-wise factuality reward as:
\begin{equation}\label{eq-factuality-reward}
    \mathcal{R}_\text{factuality}(z_j)=
   \begin{cases}
   1, & \text{if the sentence $z_j$ can be entailed from the evidence $\mathcal{K}$} \\
   0, & \text{if the sentence $z_j$ is neutral to the evidence $\mathcal{K}$} \\
   -1, & \text{if the sentence $z_j$ contradicts the evidence $\mathcal{K}$}
    \end{cases}
\end{equation}
We observe that the policy model might generate texts unrelated to the question in the reasoning process such as connective phrases or exploratory sentences (e.g., ``Aha'', ``Wait''), which can be regarded as neutral to the evidence. 

\paratitle{Answer Correctness Reward.} In most existing research~\cite{deepseek-r1,xie2025logic,r1-searcher}, answer correctness reward is usually adopted due to its significant scalability and simplicity. Specially, given an input prompt $x$, the policy model generates an output $y$, which includes intermediate reasoning texts followed by the final answer. We use a rule-based reward function to evaluate an answer's correctness as follows:
\begin{equation}\label{eq-answer-reward}
    \mathcal{R}_\text{answer}(y)=
   \begin{cases}
   1, & \text{if the final answer fully matches the ground truth}\\
   0, & \text{if the final answer does not match the ground truth}
    \end{cases}
\end{equation}
We simplify the answer reward design aiming to alleviate reward hacking and promote more diverse problem-solving behaviors based on mere outcome feedback. 

To reduce the effect of learning from sparse outcome reward, the final reward for an output $y$ consists of both answer correctness and step-wise factuality rewards, which can provide denser feedback and mitigate the \textit{high-variance gradient} issue (Theorem~\ref{theorem-1}) in typical RL binary reward modeling:
\begin{equation}\label{eq-final-reward}
    \mathcal{R}_\text{final}(y) = \mathcal{R}_\text{answer}(y) + \frac{1}{N} \sum_{j=1}^N\mathcal{R}_\text{factuality}(z_j).
\end{equation}



\subsection{Factuality-Aware Policy Optimization}

For online policy optimization, we adopt Group Relative Policy Optimization (GRPO)~\citep{deepseek-math}, which samples a group of outputs $\{y_1,...,y_G\}$ for each input $x$ and then obtains their rewards $\{\mathcal{R}_1,...,\mathcal{R}_G\}$ to compute the advantage $A_i$ for $y_i$ as $A_{i} = {\left(\mathcal{R}_i - \text{mean}(\{\mathcal{R}_1,...,\mathcal{R}_G\})\right)}/{\text{std}(\{\mathcal{R}_1,...,\mathcal{R}_G\})}$.

\paratitle{Factuality-Aware Advantage Adjustment.} 
Most existing work assigns the same advantage (derived from the overall response) to all tokens~\citep{deepseek-math,xie2025logic}, which is suboptimal and does not differentiate between them. We argue that, regardless of whether the final answer is correct, factually correct reasoning tokens should be encouraged, while incorrect reasoning tokens should be penalized. Therefore, we adjust the advantage of each token $o_{i,t} \in z_j$ for the output $y_i$ as follows:
\begin{equation}\label{eq-adjusted-advantage}
    \hat{A}_{i,t}=
   \begin{cases}
   A_i, & \text{if $A_i > 0 \land \mathcal{R}_\text{factuality}(z_j) = 1$ or $A_i < 0 \land \mathcal{R}_\text{factuality}(z_j) = -1$} \\
   -A_i, & \text{if $A_i > 0 \land \mathcal{R}_\text{factuality}(z_j) = -1$ or $A_i < 0 \land \mathcal{R}_\text{factuality}(z_j) = 1$} \\
   A_i, & \text{if $A_i = 0$ or $\mathcal{R}_\text{factuality}(z_j) = 0$} \\
    \end{cases}
\end{equation}
In this manner, each token $o_{i,t}$ is re-weighted by the factuality-aware advantage $\hat{A}_{i,t}$. Whenever the generated reasoning contains at least one sentence that is contradicted or entailed by the evidence ($\mathcal{R}_\text{factuality}=1,-1$), the advantages for tokens inside that sentence are not zero, even if the overall answer is still wrong. Consequently, the gradient is strictly non-zero, supplying a descent (or ascent) direction that pushes probability mass away from factually-incorrect tokens and toward factually-verified ones and enabling gradient-based learning to leave the \textit{spurious local optima} (Theorem~\ref{theorem-3}).
Based on the adjusted advantage, we adopt the objective in Eq.~\ref{eq-grpo} to optimize the policy model:
\begin{equation}\label{eq-fspo}\small
    \begin{aligned}
        \mathcal{J}_\text{FSPO}(\theta) &= \mathbb{E}_{y \sim \pi_\theta(\cdot|x)} \\
        \frac{1}{G}\sum_{i=1}^{G} & \frac{1}{|y_i|}\sum_{t=1}^{|y_i|}\left\{
\min\left[
\frac{\pi_{\theta}(o_{i,t}|x,y_{i,<t})}{\pi_{\theta_\text{old}}(o_{i,t}|x,y_{i,<t})}\hat{A}_{i,t},
\text{clip}\left(
\frac{\pi_{\theta}(o_{i,t}|x,y_{i,<t})}{\pi_{\theta_\text{old}}(o_{i,t}|x,y_{i,<t})},
1-\epsilon, 1+\epsilon
\right)\hat{A}_{i,t}
\right]
\right\}.
    \end{aligned}
\end{equation}

\section{Experiments}
\label{sec-experiments}

In this section, we present our experimental setup, the main results, and our analysis of the results.

\subsection{Experimental Setup}
\label{sec-setup}

\paratitle{Datasets and Metrics.} To perform FSPO, we adopt a challenging subset of HotpotQA~\cite{hotpotqa} and 2WikiMultiHopQA~\cite{2wikimultihopqa} from previous work~\citep{r1-searcher} and randomly select 2K examples for training (including questions, answers, and corresponding Wikipedia segments). Besides, to improve the complex reasoning capabilities of LLMs, we incorporate the SimpleRL dataset~\citep{zeng2025simplerl} with 8K mathematical problems for vanilla RL training. For hallucination evaluation, we adopt three widely-used benchmarks, i.e., TruthfulQA~\cite{truthfulqa}, HaluEval~\cite{halueval}, and HalluQA~\cite{halluqa}. For HaluEval, we adopt the QA subset and report the accuracy with which the model can discriminate correct and hallucinated answers. For TruthfulQA and HalluQA, we report the ratio of outputs that the judge model predicts as truthful. 
We employ GSM8K~\citep{cobbe2021gsm8k}, MATH-500~\citep{hendrycksmath2021}, AIME 2024~\cite{aime2024}, and AIME 2025~\cite{aime2025} to evaluate mathematical reasoning performance and report the Pass@1 accuracy results in these datasets. For more details, refer to Appendix~\ref{app-datasets}.

\paratitle{Baselines.} We employ Qwen2.5-7B-Base/Instruct~\cite{yang2024qwen2} and Llama3.1-8B-Instruct~\cite{grattafiori2024llama} as backbone models for our approach. We compare our model against three groups of baselines:

$\bullet$~\textbf{API-based Models} include DeepSeek-V3~\cite{deepseek-v3}, DeepSeek-R1~\cite{deepseek-r1}, OpenAI GPT-4o, and GPT-o1.

$\bullet$~\textbf{Reasoning Models} consist of QwQ-32B~\cite{qwq32b}, Distill-Qwen-7B/14B/32B, and Distill-Llama-8B~\cite{deepseek-r1}, which are fine-tuned on the long CoT data (800K samples) synthesized from DeepSeek-R1. 

$\bullet$~\textbf{Open-source Models} involve Qwen2.5-7B-Base/Instruct and Llama3.1-8B-Instruct, which are direct comparisons and can be easily accessed through their open-source weights.

\paratitle{Implementation Details.} We train our model using the verl~\cite{sheng2024hybridflow} framework. Specifically, during training, we use a batch size of 8, generate 8 rollouts for each prompt, set a maximum prompt and response length of 2,048 tokens, and train using a mini-batch size of 1,024. We train our model on the mixture dataset of HotpotQA subset and SimpleRL for one epoch with a constant learning rate of 4e-7 and temperature parameter of 1.0. The coefficient for KL loss is set to 1e-3 and the clip ratio is 0.2. We adopt the HHEM-2.1 model~\cite{HughesBae2023} to verify step-wise factuality. During evaluation, we set the sampling temperature to 1.0. We present prompt templates throughout this work in Appendix~\ref{app-prompts}. 

\subsection{Main Results}

The experimental results presented in Table~\ref{tab:main_results} demonstrate the performance of our proposed method, FSPO, across various reasoning and hallucination benchmarks compared to multiple competitive models, including API-based, reasoning, and open-source alternatives. On hallucination benchmarks, FSPO clearly outperforms all open-source, reasoning and even some API-based models. Specifically, FSPO based on Qwen2.5-7B-Base achieves the highest scores among all open-source and reasoning models on hallucination datasets, clearly demonstrating the effectiveness of our training approach in substantially reducing hallucinations and enhancing factual correctness.

On reasoning benchmarks, FSPO achieves superior results within the open-source category, notably surpassing other base models like Qwen2.5-7B-Instruct and Llama3.1-8B-Instruct by significant margins (e.g., GSM8K 89.5\% vs. 73.2\% and 77.5\%, respectively). FSPO even surpasses some reasoning models on GSM8K, and maintains competitive performance on MATH-500.  This shows that our FSPO method can enhance reasoning capabilities while reducing hallucinations. 

In terms of advanced reasoning tasks, FSPO delivers competitive outcomes, especially on AIME 2024 and 2025, with scores of 20.0\% and 13.3\% respectively. These scores notably surpass smaller base models, although trailing the reasoning and much larger API-based models. Such results underscore FSPO's robust mathematical reasoning capabilities, showcasing the efficacy of our specialized fine-tuning methods in elevating performance on complex mathematical reasoning tasks despite the base model's comparatively modest size.

\begin{table}[t]
    \centering
    \caption{Evaluation results of closed-source, distilled, and open-source models across four reasoning benchmarks and three hallucination benchmarks. The \textbf{bold} and \underline{underlined} scores denote the best and second best results among open-source models and our trained models.}
    \label{tab:main_results}
    \resizebox{\linewidth}{!}{\begin{tabular}{lccccccc}
    \toprule
     \multirow{2}{*}{\textbf{Model}} & \multirow{2}{*}{\textbf{GSM8K}} & \textbf{MATH} & \textbf{AIME24} & \textbf{AIME25} & \multirow{2}{*}{\textbf{TruthfulQA}$\uparrow$} & \multirow{2}{*}{\textbf{HaluEval-QA}$\uparrow$} & \multirow{2}{*}{\textbf{HalluQA}$\uparrow$} \\
     & & \textbf{500} & \textbf{(Pass@1)} & \textbf{(Pass@1)} & &  &  \\
    \midrule
    \textbf{API-based Models} \\
    DeepSeek-V3 & 89.3 & 90.2 & 39.2 & 26.6 & 53.0 & 70.6 & 59.1  \\
    DeepSeek-R1 & 92.0 & 97.3 & 79.8 & 66.7 & 45.2 & 51.8 & 51.1  \\
    GPT-4o-0513 & 90.8 & 74.6 & 9.3 & 13.3 & 59.0 & 62.6 & 53.1 \\
    GPT-o1-1217 & 92.3 & 96.4 & 79.2 & 76.7 & 62.4 & 70.2 & 56.0 \\
    \midrule
    \textbf{Reasoning Models}\\
    QwQ-32B & {88.6} & {89.8} & {79.5} & {56.7} & 23.2 & 29.6 & 23.7 \\
    R1-Distill-Qwen-32B  & 87.4 & 94.3 & 72.6 & 53.3 & 19.7 & 33.5 & 26.9 \\
    R1-Distill-Qwen-14B  & 85.1 & 93.9 & 69.7 & 50.0 & 18.2 & 24.8 & 20.2 \\
    R1-Distill-Qwen-7B  & 84.3 & 92.8 & 55.5 & 33.3 & 6.9 & 11.6 & 3.1 \\
    R1-Distill-Llama-8B & 82.1 & 89.1 & 50.4 & 26.7 & 8.8 & 14.6 & 4.6\\
    \midrule
    \textbf{Open-source Models}\\
    Qwen2.5-7B-Base & 65.2 & 35.7 & 3.3 & 3.3 & 38.2 & {48.0} & 39.5 \\
    Qwen2.5-7B-Instruct & 73.2 & 51.6 & 6.7 & 3.3 & 36.7 & 38.4 & 32.0 \\
    Llama3.1-8B-Instruct & 77.5 & 33.1 & 6.7 & 0.0 & 26.4 & 36.7 & 12.2 \\
    \midrule 
    FSPO (Qwen-Base) & \textbf{89.5} & \textbf{75.5} & \underline{16.7} & \textbf{13.3} & \textbf{58.4} &  \textbf{83.0} & \textbf{52.0} \\
    FSPO (Qwen-Instruct) & \underline{89.4} & \underline{74.7} & \textbf{20.0}  & \textbf{13.3} & \underline{54.0} & 64.7 & \underline{50.0} \\
    FSPO (Llama-Instruct) & 86.2 & 68.3 & 13.3 & 6.7 &  41.1& \underline{67.1} & 42.0 \\
    \bottomrule
    \end{tabular}}
\end{table}

\subsection{Further Analysis}

We further conduct detailed analysis on several key aspects of our approach. Due to similar findings, the following analysis is focused on MATH-500 and HaluEval-QA with Qwen2.5-7B-Base.

\paratitle{Ablation Analysis.} To validate the effectiveness of our approach, we ablate its key design elements. We compare FSPO to two variants: 1) \textit{GRPO} trains the model on the mixture dataset using typical GRPO with only answer reward in Eq.~\ref{eq-answer-reward}; and 2) \textit{GRPO w/ factuality} incorporates both answer and factuality rewards in Eq.~\ref{eq-final-reward}. The results from Figure~\ref{fig:ablation} further support the effectiveness of FSPO, clearly outperforming baseline methods GRPO and GRPO with factuality scores across training steps on both MATH-500 and HaluEval-QA. FSPO maintains consistently superior reasoning performance while achieving continuous improvement in reducing hallucinations, indicating robust generalization capabilities and the practical benefits of our proposed method.

\paratitle{RL Algorithm Analysis.} We further adapt FSPO to the Reinforce++ algorithm~\cite{hu2025reinforce++} for validating the adaptability and effectiveness of FSPO beyond GRPO. As shown in Figure~\ref{fig:rl_algorithm}, FSPO with Reinforce++ exhibits significant improvement over Reinforce++ alone, closely following the performance of FSPO with GRPO. Moreover, FSPO with GRPO consistently achieves higher accuracy than the Reinforce++ baselines across training steps on both the MATH-500 and HaluEval-QA benchmarks. These findings indicate that our proposed FSPO approach generalizes well to other reinforcement learning algorithms, enhancing their effectiveness and stability in solving complex reasoning and reducing hallucinations.

\begin{figure}[t]
    \centering
    \begin{minipage}[b]{0.48\textwidth}
        \centering
        \includegraphics[width=\textwidth]{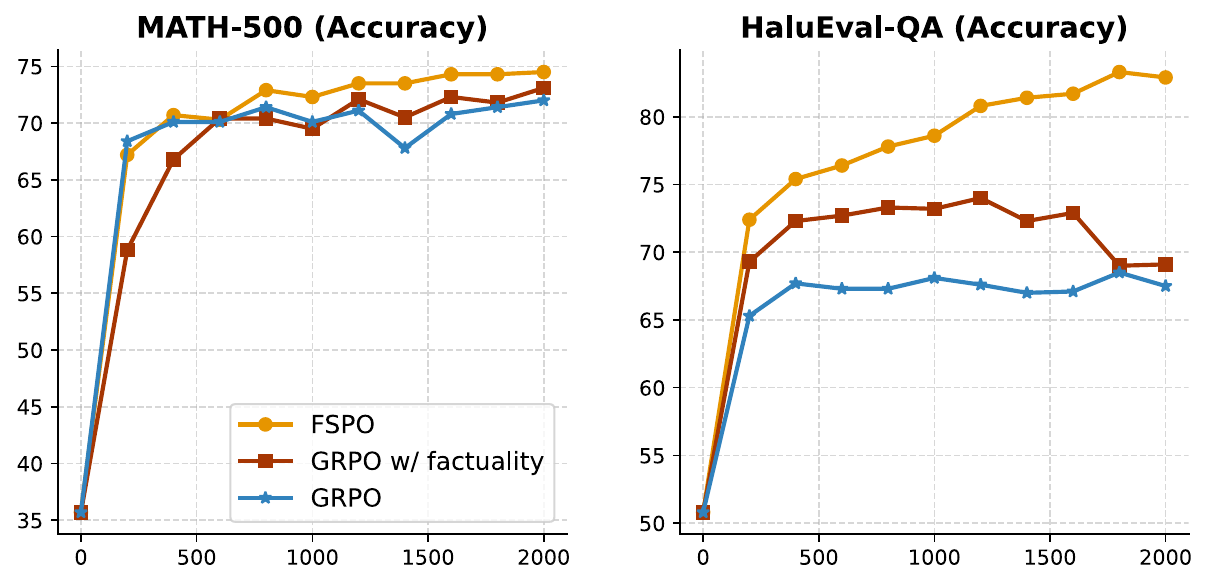}  
        \caption{Pass@1 accuracy over training steps for ablation analysis.}
        \label{fig:ablation}
    \end{minipage}
    \hfill
    \begin{minipage}[b]{0.48\textwidth}
        \centering
    \includegraphics[width=\textwidth]{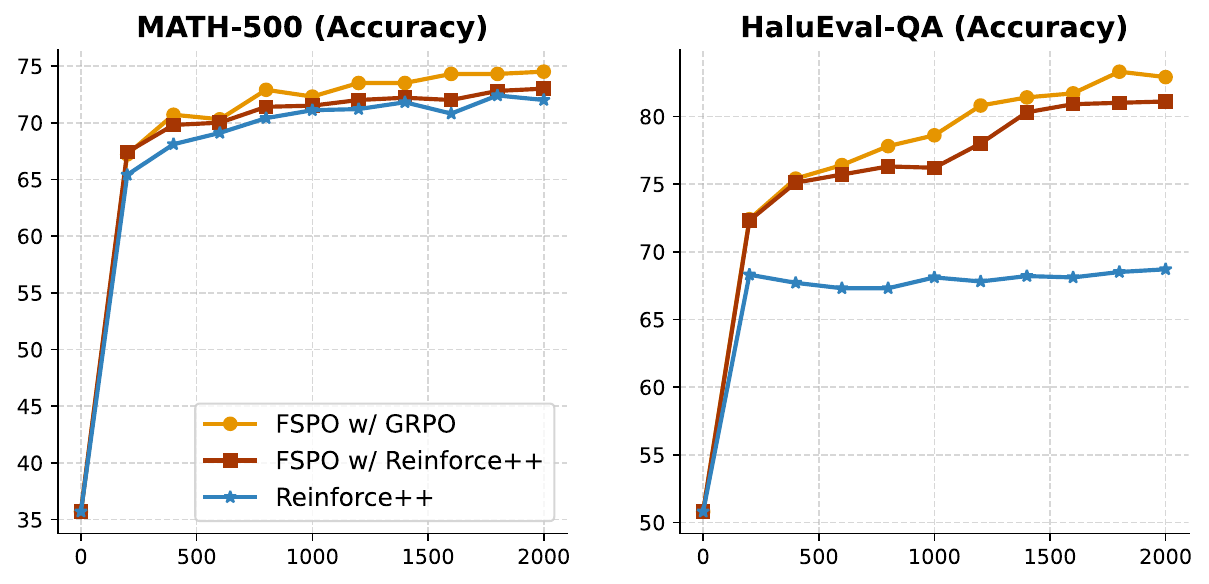}  
        \caption{Pass@1 accuracy over training steps with different RL algorithms.}
        \label{fig:rl_algorithm}
    \end{minipage}
    \vspace{-3pt}
\end{figure}

\paratitle{Number of Training Samples.} To investigate the model performance under varying numbers of training samples, we randomly sample 1K, 2K, 4K, and 8K samples from the HotpotQA subset~\cite{r1-searcher}, with the results shown in Figure~\ref{fig:training_samples}. On MATH-500, the accuracies are similar from 1K to 4K samples while diminishing with 8K samples, which might be because more factual QA data lower the math reasoning performance. In contrast, on the HaluEval-QA task, significant accuracy improvement continues up to 2K samples, with 4K and 8K samples achieving mediocre results. This demonstrates that our approach can significantly reduce hallucinations using only a small amount of data, without compromising the model's capacity for complex reasoning.

\paratitle{Factuality Improvement Analysis.} We further explore the benefit of incorporating step-wise factuality reward in our approach. The results in Figure~\ref{fig:length_score} illustrate that both FSPO and GRPO generate responses of similar lengths. However, as our proposed FSPO approach explicitly optimizes for factual correctness during intermediate reasoning steps, it achieves significantly higher factuality scores compared to GRPO. This suggests that FSPO effectively enhances model reliability and factual accuracy without compromising the quality of the generated content.

\begin{figure}[tb]
    \centering
    \begin{minipage}[b]{0.48\textwidth}
        \centering
        \includegraphics[width=\textwidth]{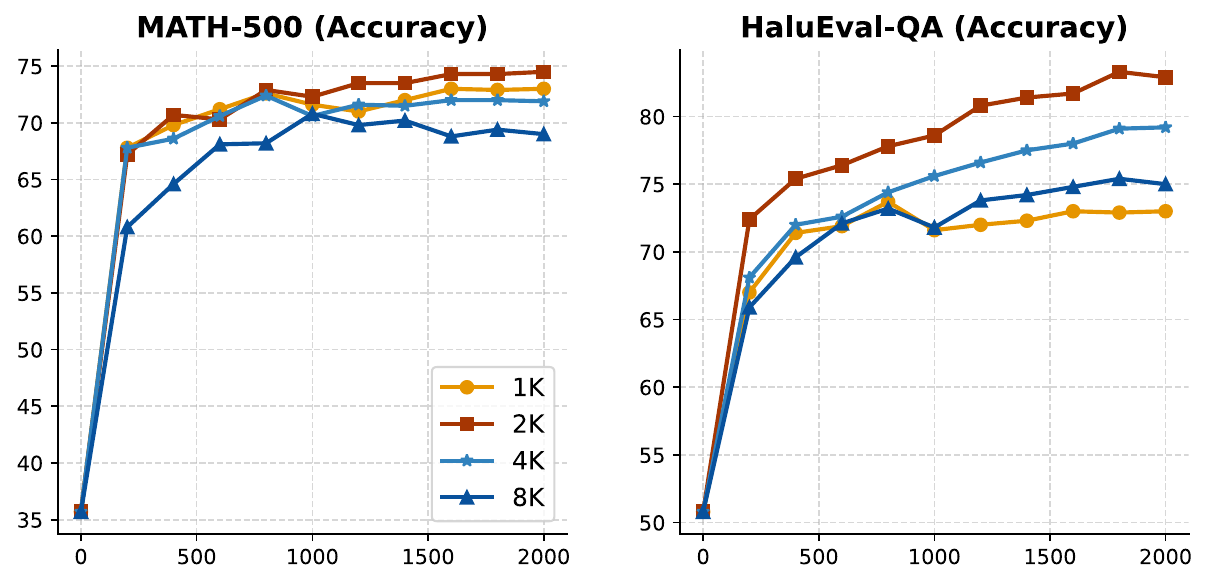}  
        \caption{Pass@1 accuracy over training steps with varying number of training samples.}
        \label{fig:training_samples}
    \end{minipage}
    \hfill
    \begin{minipage}[b]{0.48\textwidth}
        \centering
    \includegraphics[width=\textwidth]{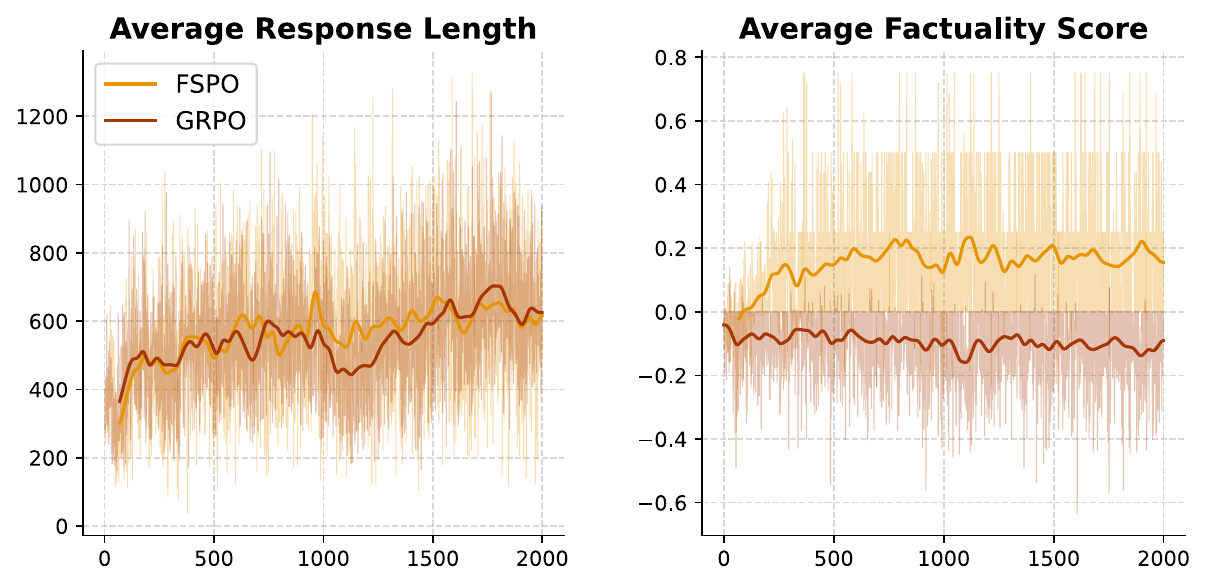}  
        \caption{Average response length and factuality score for FSPO and GRPO.}
        \label{fig:length_score}
    \end{minipage}
    \vspace{-3pt}
\end{figure}

\section{Related Work}

\paratitle{Large Reasoning Models.} Recent advancements in large reasoning models have marked a significant shift toward explicitly improving complex reasoning, planning, and problem-solving capabilities~\citep{sun2023survey}. OpenAI's o1~\citep{openai-o1} exemplifies the integration of RL with long CoT reasoning, enabling the model to iteratively refine its reasoning process before producing an answer. This approach significantly enhances its capability in complex problem-solving scenarios, particularly in mathematical reasoning and algorithmic tasks. DeepSeek-R1~\citep{deepseek-r1} and Kimi k1.5~\citep{kimi-k1.5} similarly emphasize RL-based reasoning but introduce an open and flexible training pipeline. QwQ model~\citep{qwq32b} underscores the potential of smaller models when optimized specifically for reasoning tasks. Increasing efforts are devoted to reproducing prominent reasoning performance of these models~\citep{limo,qin2024o1,chen2025empirical}. However, existing research has predominantly concentrated on enhancing the reasoning capabilities of models, while overlooking the issue of hallucinations emerging within these models.

\paratitle{Hallucination.} Hallucination has been a fundamental challenge in large language models (LLMs), receiving extensive attention in existing literature~\citep{hallucination_survey,ji2023survey,Zhang-CoRR-2023-Siren}. To investigate and detect hallucinations in LLMs, in scenarios where LLMs provide access to their internal states, researchers can closely examine their internals to uncover underlying mechanisms leading to hallucinations~\citep{varshney2023stitch,yuksekgonul2023attention}. Conversely, when the internal states of LLMs are inaccessible and they can only be queried through external API calls, researchers typically approach hallucination detection by examining correlations between input prompts and generated responses~\citep{manakul2023selfcheckgpt,yao2023llm}. To mitigate hallucinations, existing studies propose various strategies. During the pre-training phase, mitigation efforts typically revolve around curating and cleaning datasets~\citep{das2023diving,umapathi2023med}.
Following pre-training, fine-tuning methods such as supervised fine-tuning~\citep{wang2022self} and reinforcement learning with human feedback~\citep{instructgpt} are commonly employed to further mitigate hallucinations. 
Additionally, post-processing mitigation involves verifying the factual correctness of generated content using either the LLM itself~\citep{mundler2023self} or external knowledge-based verification methods~\citep{chen2023purr}. Our work is an early effort focusing on analyzing and mitigating hallucinations in large reasoning models.

\section{Conclusion}

In this paper, we have highlighted and theoretically explained a critical limitation associated with RL-based fine-tuning of LLMs, specifically the increased incidence of hallucinations in generated reasoning content. Our analysis attributed this issue to unstable training dynamics driven by high-gradient variance, entropy-induced randomness, and convergence toward spurious local optima inherent in traditional outcome-based RL training. To address these challenges, we introduced FSPO, a novel factuality-aware RL algorithm that integrates step-wise factual verification, providing a denser and more accurate feedback mechanism during training. Comprehensive experimental evaluations demonstrate that FSPO significantly mitigates hallucinations across multiple benchmarks without compromising the models' reasoning capabilities. 

\section{Acknowledgments}

This research is supported by the National Research Foundation Singapore under its AI Singapore Programme (Award Number: AISG3-RP-2022-030).

\bibliographystyle{plainnat}
\bibliography{main.bib}

@inproceedings{truthfulqa,
  author       = {Stephanie Lin and
                  Jacob Hilton and
                  Owain Evans},
  editor       = {Smaranda Muresan and
                  Preslav Nakov and
                  Aline Villavicencio},
  title        = {{TruthfulQA: Measuring How Models Mimic Human Falsehoods}},
  booktitle    = {Proceedings of the 60th Annual Meeting of the Association for Computational
                  Linguistics (Volume 1: Long Papers), {ACL} 2022, Dublin, Ireland,
                  May 22-27, 2022},
  pages        = {3214--3252},
  publisher    = {Association for Computational Linguistics},
  year         = {2022}
}

@article{yang2025kongzi,
  title={{Kongzi: A Historical Large Language Model with Fact Enhancement}},
  author={Yang, Jiashu and Wang, Ningning and Zhao, Yian and Feng, Chaoran and Du, Junjia and Pang, Hao and Fang, Zhirui and Cheng, Xuxin},
  journal={arXiv preprint arXiv:2504.09488},
  year={2025}
}

@inproceedings{halueval,
  author       = {Junyi Li and
                  Xiaoxue Cheng and
                  Xin Zhao and
                  Jian{-}Yun Nie and
                  Ji{-}Rong Wen},
  editor       = {Houda Bouamor and
                  Juan Pino and
                  Kalika Bali},
  title        = {{HaluEval: {A} Large-Scale Hallucination Evaluation Benchmark for Large
                  Language Models}},
  booktitle    = {Proceedings of the 2023 Conference on Empirical Methods in Natural
                  Language Processing, {EMNLP} 2023, Singapore, December 6-10, 2023},
  pages        = {6449--6464},
  publisher    = {Association for Computational Linguistics},
  year         = {2023}
}

@article{halluqa,
  author       = {Qinyuan Cheng and
                  Tianxiang Sun and
                  Wenwei Zhang and
                  Siyin Wang and
                  Xiangyang Liu and
                  Mozhi Zhang and
                  Junliang He and
                  Mianqiu Huang and
                  Zhangyue Yin and
                  Kai Chen and
                  Xipeng Qiu},
  title        = {{Evaluating Hallucinations in Chinese Large Language Models}},
  journal      = {CoRR},
  volume       = {abs/2310.03368},
  year         = {2023}
}

@misc{HughesBae2023,
  author       = {Simon Hughes and Minseok Bae},
  title        = {{Vectara Hallucination Leaderboard}},
  year         = {2023},
  month        = {11},
  publisher    = {Vectara, Inc},
  doi          = {},
  url          = {https://github.com/vectara/hallucination-leaderboard},
  abstract     = {A leaderboard comparing LLM performance at maintaining factual consistency when summarizing a set of facts.},
  keywords     = {nlp, llm, hallucination, nli, machine learning},
  license      = {Apache-2.0},
}

@article{deepseek-v3,
  title={{Deepseek-v3 Technical Report}},
  author={Liu, Aixin and Feng, Bei and Xue, Bing and Wang, Bingxuan and Wu, Bochao and Lu, Chengda and Zhao, Chenggang and Deng, Chengqi and Zhang, Chenyu and Ruan, Chong and others},
  journal={arXiv preprint arXiv:2412.19437},
  year={2024}
}

@article{deepseek-r1,
  title={{Deepseek-R1: Incentivizing Reasoning Capability in LLMs via Reinforcement Learning}},
  author={Guo, Daya and Yang, Dejian and Zhang, Haowei and Song, Junxiao and Zhang, Ruoyu and Xu, Runxin and Zhu, Qihao and Ma, Shirong and Wang, Peiyi and Bi, Xiao and others},
  journal={arXiv preprint arXiv:2501.12948},
  year={2025}
}

@article{yang2024qwen2,
  title={{Qwen2.5 Technical Report}},
  author={Yang, An and Yang, Baosong and Zhang, Beichen and Hui, Binyuan and Zheng, Bo and Yu, Bowen and Li, Chengyuan and Liu, Dayiheng and Huang, Fei and Wei, Haoran and others},
  journal={arXiv preprint arXiv:2412.15115},
  year={2024}
}

@article{grattafiori2024llama,
  title={{The Llama 3 Herd of Models}},
  author={Grattafiori, Aaron and Dubey, Abhimanyu and Jauhri, Abhinav and Pandey, Abhinav and Kadian, Abhishek and Al-Dahle, Ahmad and Letman, Aiesha and Mathur, Akhil and Schelten, Alan and Vaughan, Alex and others},
  journal={arXiv preprint arXiv:2407.21783},
  year={2024}
}

@misc{qwq32b,
    title = {{QwQ-32B: Embracing the Power of Reinforcement Learning}},
    url = {https://qwenlm.github.io/blog/qwq-32b/},
    author = {Qwen-Team},
    month = {March},
    year = {2025}
}

@article{sun2023survey,
  title={{A Survey of Reasoning with Foundation Models}},
  author={Sun, Jiankai and Zheng, Chuanyang and Xie, Enze and Liu, Zhengying and Chu, Ruihang and Qiu, Jianing and Xu, Jiaqi and Ding, Mingyu and Li, Hongyang and Geng, Mengzhe and others},
  journal={arXiv preprint arXiv:2312.11562},
  year={2023}
}

@article{openai-o1,
  title={{OpenAI o1 System Card}},
  author={Jaech, Aaron and Kalai, Adam and Lerer, Adam and Richardson, Adam and El-Kishky, Ahmed and Low, Aiden and Helyar, Alec and Madry, Aleksander and Beutel, Alex and Carney, Alex and others},
  journal={arXiv preprint arXiv:2412.16720},
  year={2024}
}

@article{kimi-k1.5,
  title={{Kimi k1. 5: Scaling Reinforcement Learning with LLMs}},
  author={Kimi-Team},
  journal={arXiv preprint arXiv:2501.12599},
  year={2025}
}

@article{hallucination_survey,
author = {Huang, Lei and Yu, Weijiang and Ma, Weitao and Zhong, Weihong and Feng, Zhangyin and Wang, Haotian and Chen, Qianglong and Peng, Weihua and Feng, Xiaocheng and Qin, Bing and Liu, Ting},
title = {{A Survey on Hallucination in Large Language Models: Principles, Taxonomy, Challenges, and Open Questions}},
year = {2025},
issue_date = {March 2025},
publisher = {Association for Computing Machinery},
address = {New York, NY, USA},
volume = {43},
number = {2},
issn = {1046-8188},
journal = {ACM Trans. Inf. Syst.},
month = jan,
articleno = {42},
numpages = {55},
keywords = {Large Language Models, Hallucination, Factuality, Faithfulness}
}

@article{humaneval,
  author       = {Mark Chen and
                  Jerry Tworek and
                  Heewoo Jun and
                  Qiming Yuan and
                  Henrique Pond{\'{e}} de Oliveira Pinto and
                  Jared Kaplan and
                  Harri Edwards and
                  Yuri Burda and
                  Nicholas Joseph and
                  Greg Brockman and
                  Alex Ray and
                  Raul Puri and
                  Gretchen Krueger and
                  Michael Petrov and
                  Heidy Khlaaf and
                  Girish Sastry and
                  Pamela Mishkin and
                  Brooke Chan and
                  Scott Gray and
                  Nick Ryder and
                  Mikhail Pavlov and
                  Alethea Power and
                  Lukasz Kaiser and
                  Mohammad Bavarian and
                  Clemens Winter and
                  Philippe Tillet and
                  Felipe Petroski Such and
                  Dave Cummings and
                  Matthias Plappert and
                  Fotios Chantzis and
                  Elizabeth Barnes and
                  Ariel Herbert{-}Voss and
                  William Hebgen Guss and
                  Alex Nichol and
                  Alex Paino and
                  Nikolas Tezak and
                  Jie Tang and
                  Igor Babuschkin and
                  Suchir Balaji and
                  Shantanu Jain and
                  William Saunders and
                  Christopher Hesse and
                  Andrew N. Carr and
                  Jan Leike and
                  Joshua Achiam and
                  Vedant Misra and
                  Evan Morikawa and
                  Alec Radford and
                  Matthew Knight and
                  Miles Brundage and
                  Mira Murati and
                  Katie Mayer and
                  Peter Welinder and
                  Bob McGrew and
                  Dario Amodei and
                  Sam McCandlish and
                  Ilya Sutskever and
                  Wojciech Zaremba},
  title        = {{Evaluating Large Language Models Trained on Code}},
  journal      = {CoRR},
  volume       = {abs/2107.03374},
  year         = {2021},
}

@article{limo,
  title={{LIMO: Less is More for Reasoning}},
  author={Ye, Yixin and Huang, Zhen and Xiao, Yang and Chern, Ethan and Xia, Shijie and Liu, Pengfei},
  journal={arXiv preprint arXiv:2502.03387},
  year={2025}
}

@article{qin2024o1,
  title={{O1 Replication Journey: A Strategic Progress Report--Part 1}},
  author={Qin, Yiwei and Li, Xuefeng and Zou, Haoyang and Liu, Yixiu and Xia, Shijie and Huang, Zhen and Ye, Yixin and Yuan, Weizhe and Liu, Hector and Li, Yuanzhi and others},
  journal={arXiv preprint arXiv:2410.18982},
  year={2024}
}

@article{chen2025empirical,
  title={{An Empirical Study on Eliciting and Improving R1-like Reasoning Models}},
  author={Chen, Zhipeng and Min, Yingqian and Zhang, Beichen and Chen, Jie and Jiang, Jinhao and Cheng, Daixuan and Zhao, Wayne Xin and Liu, Zheng and Miao, Xu and Lu, Yang and others},
  journal={arXiv preprint arXiv:2503.04548},
  year={2025}
}

@article{ji2023survey,
  title={{Survey of Hallucination in Natural Language Generation}},
  author={Ji, Ziwei and Lee, Nayeon and Frieske, Rita and Yu, Tiezheng and Su, Dan and Xu, Yan and Ishii, Etsuko and Bang, Ye Jin and Madotto, Andrea and Fung, Pascale},
  journal={ACM Computing Surveys},
  volume={55},
  number={12},
  pages={1--38},
  year={2023},
  publisher={ACM New York, NY}
}

@article{Zhang-CoRR-2023-Siren,
  author       = {Yue Zhang and
                  Yafu Li and
                  Leyang Cui and
                  Deng Cai and
                  Lemao Liu and
                  Tingchen Fu and
                  Xinting Huang and
                  Enbo Zhao and
                  Yu Zhang and
                  Yulong Chen and
                  Longyue Wang and
                  Anh Tuan Luu and
                  Wei Bi and
                  Freda Shi and
                  Shuming Shi},
  title        = {{Siren's Song in the {AI} Ocean: {A} Survey on Hallucination in Large
                  Language Models}},
  journal      = {CoRR},
  volume       = {abs/2309.01219},
  year         = {2023},
  url          = {https://doi.org/10.48550/arXiv.2309.01219},
  doi          = {10.48550/ARXIV.2309.01219},
  eprinttype    = {arXiv},
  eprint       = {2309.01219}
}

@article{varshney2023stitch,
  title={{A Stitch in Time Saves Nine: Detecting and Mitigating Hallucinations of LLMs by Validating Low-Confidence Generation}},
  author={Varshney, Neeraj and Yao, Wenlin and Zhang, Hongming and Chen, Jianshu and Yu, Dong},
  journal={arXiv preprint arXiv:2307.03987},
  year={2023}
}

@article{yuksekgonul2023attention,
  title={{Attention Satisfies: A Constraint-Satisfaction Lens on Factual Errors of Language Models}},
  author={Yuksekgonul, Mert and Chandrasekaran, Varun and Jones, Erik and Gunasekar, Suriya and Naik, Ranjita and Palangi, Hamid and Kamar, Ece and Nushi, Besmira},
  journal={arXiv preprint arXiv:2309.15098},
  year={2023}
}

@article{manakul2023selfcheckgpt,
  title={{SelfCheckGPT: Zero-Resource Black-Box Hallucination Detection for Generative Large Language Models}},
  author={Manakul, Potsawee and Liusie, Adian and Gales, Mark JF},
  journal={arXiv preprint arXiv:2303.08896},
  year={2023}
}

@article{yao2023llm,
  title={{LLM Lies: Hallucinations are not Bugs, but Features as Adversarial Examples}},
  author={Yao, Jia-Yu and Ning, Kun-Peng and Liu, Zhen-Hui and Ning, Mu-Nan and Yuan, Li},
  journal={arXiv preprint arXiv:2310.01469},
  year={2023}
}

@article{das2023diving,
  title={{Diving Deep into Modes of Fact Hallucinations in Dialogue Systems}},
  author={Das, Souvik and Saha, Sougata and Srihari, Rohini K},
  journal={arXiv preprint arXiv:2301.04449},
  year={2023}
}

@article{umapathi2023med,
  title={{Med-HALT: Medical Domain Hallucination Test for Large Language Models}},
  author={Umapathi, Logesh Kumar and Pal, Ankit and Sankarasubbu, Malaikannan},
  journal={arXiv preprint arXiv:2307.15343},
  year={2023}
}

@article{wang2022self,
  title={{Self-Instruct: Aligning Language Models with Self-Generated Instructions}},
  author={Wang, Yizhong and Kordi, Yeganeh and Mishra, Swaroop and Liu, Alisa and Smith, Noah A and Khashabi, Daniel and Hajishirzi, Hannaneh},
  journal={arXiv preprint arXiv:2212.10560},
  year={2022}
}

@article{instructgpt,
  title={{Training language models to follow instructions with human feedback}},
  author={Ouyang, Long and Wu, Jeffrey and Jiang, Xu and Almeida, Diogo and Wainwright, Carroll and Mishkin, Pamela and Zhang, Chong and Agarwal, Sandhini and Slama, Katarina and Ray, Alex and others},
  journal={Advances in neural information processing systems},
  volume={35},
  pages={27730--27744},
  year={2022}
}

@article{mundler2023self,
  title={{Self-contradictory Hallucinations of Large Language Models: Evaluation, Detection and Mitigation}},
  author={M{\"u}ndler, Niels and He, Jingxuan and Jenko, Slobodan and Vechev, Martin},
  journal={arXiv preprint arXiv:2305.15852},
  year={2023}
}

@article{chen2023purr,
  title={{PURR: Efficiently Editing Language Model Hallucinations by Denoising Language Model Corruptions}},
  author={Chen, Anthony and Pasupat, Panupong and Singh, Sameer and Lee, Hongrae and Guu, Kelvin},
  journal={arXiv preprint arXiv:2305.14908},
  year={2023}
}

@article{alphago,
  title={{Mastering the game of Go with deep neural networks and tree search}},
  author={Silver, David and Huang, Aja and Maddison, Chris J and Guez, Arthur and Sifre, Laurent and Van Den Driessche, George and Schrittwieser, Julian and Antonoglou, Ioannis and Panneershelvam, Veda and Lanctot, Marc and others},
  journal={nature},
  volume={529},
  number={7587},
  pages={484--489},
  year={2016},
  publisher={Nature Publishing Group}
}

@article{ladosz2022exploration,
  title={{Exploration in Deep Reinforcement Learning: A Survey}},
  author={Ladosz, Pawel and Weng, Lilian and Kim, Minwoo and Oh, Hyondong},
  journal={Information Fusion},
  volume={85},
  pages={1--22},
  year={2022},
  publisher={Elsevier}
}

@article{schulman2017proximal,
  title={{Proximal Policy Optimization Algorithms}},
  author={Schulman, John and Wolski, Filip and Dhariwal, Prafulla and Radford, Alec and Klimov, Oleg},
  journal={arXiv preprint arXiv:1707.06347},
  year={2017}
}

@article{deepseek-math,
  title={{DeepSeekMath: Pushing the Limits of Mathematical Reasoning in Open Language Models}},
  author={Shao, Zhihong and Wang, Peiyi and Zhu, Qihao and Xu, Runxin and Song, Junxiao and Bi, Xiao and Zhang, Haowei and Zhang, Mingchuan and Li, YK and Wu, Y and others},
  journal={arXiv preprint arXiv:2402.03300},
  year={2024}
}

@inproceedings{hotpotqa,
  title={{HotpotQA: A Dataset for Diverse, Explainable Multi-hop Question Answering}},
  author={Yang, Zhilin and Qi, Peng and Zhang, Saizheng and Bengio, Yoshua and Cohen, William and Salakhutdinov, Ruslan and Manning, Christopher D},
  booktitle={Proceedings of the 2018 Conference on Empirical Methods in Natural Language Processing},
  pages={2369--2380},
  year={2018}
}

@misc{aime2024,
  title={{American Invitational Mathematics Examination - AIME}},
  author={MAA},
  journal={American Invitational Mathematics Examination - AIME 2024},
  year={2024}
}

@misc{aime2025,
  title={{American Invitational Mathematics Examination - AIME}},
  author={MAA},
  journal={American Invitational Mathematics Examination - AIME 2025},
  year={2025}
}

@article{r1-searcher,
  title={{R1-Searcher: Incentivizing the Search Capability in LLMs via Reinforcement Learning}},
  author={Song, Huatong and Jiang, Jinhao and Min, Yingqian and Chen, Jie and Chen, Zhipeng and Zhao, Wayne Xin and Fang, Lei and Wen, Ji-Rong},
  journal={arXiv preprint arXiv:2503.05592},
  year={2025}
}

@misc{zeng2025simplerl,
      title={{7B Model and 8K Examples: Emerging Reasoning with Reinforcement Learning is Both Effective and Efficient}},
      author={Weihao Zeng and Yuzhen Huang and Wei Liu and Keqing He and Qian Liu and Zejun Ma and Junxian He},
      year={2025},
      howpublished={\url{https://hkust-nlp.notion.site/simplerl-reason}},
      note={Notion Blog}
}

@article{hendrycksmath2021,
  title={{Measuring Mathematical Problem Solving With the MATH Dataset}},
  author={Dan Hendrycks and Collin Burns and Saurav Kadavath and Akul Arora and Steven Basart and Eric Tang and Dawn Song and Jacob Steinhardt},
  journal={NeurIPS},
  year={2021}
}

@article{cobbe2021gsm8k,
  title={{Training Verifiers to Solve Math Word Problems}},
  author={Cobbe, Karl and Kosaraju, Vineet and Bavarian, Mohammad and Chen, Mark and Jun, Heewoo and Kaiser, Lukasz and Plappert, Matthias and Tworek, Jerry and Hilton, Jacob and Nakano, Reiichiro and Hesse, Christopher and Schulman, John},
  journal={arXiv preprint arXiv:2110.14168},
  year={2021}
}

@article{xie2025logic,
  title={{Logic-RL: Unleashing LLM Reasoning with Rule-Based Reinforcement Learning}},
  author={Xie, Tian and Gao, Zitian and Ren, Qingnan and Luo, Haoming and Hong, Yuqian and Dai, Bryan and Zhou, Joey and Qiu, Kai and Wu, Zhirong and Luo, Chong},
  journal={arXiv preprint arXiv:2502.14768},
  year={2025}
}

@article{sheng2024hybridflow,
  title   = {{HybridFlow: A Flexible and Efficient RLHF Framework}},
  author  = {Guangming Sheng and Chi Zhang and Zilingfeng Ye and Xibin Wu and Wang Zhang and Ru Zhang and Yanghua Peng and Haibin Lin and Chuan Wu},
  year    = {2024},
  journal = {arXiv preprint arXiv: 2409.19256}
}

@article{hu2025reinforce++,
  title={{REINFORCE++: A Simple and Efficient Approach for Aligning Large Language Models}},
  author={Hu, Jian},
  journal={arXiv preprint arXiv:2501.03262},
  year={2025}
}

@inproceedings{2wikimultihopqa,
    title = "Constructing A Multi-hop {QA} Dataset for Comprehensive Evaluation of Reasoning Steps",
    author = "Ho, Xanh  and
      Duong Nguyen, Anh-Khoa  and
      Sugawara, Saku  and
      Aizawa, Akiko",
    booktitle = "Proceedings of the 28th International Conference on Computational Linguistics",
    month = dec,
    year = "2020",
}

@article{cuadron2025danger,
  title={{The Danger of Overthinking: Examining the Reasoning-Action Dilemma in Agentic Tasks}},
  author={Cuadron, Alejandro and Li, Dacheng and Ma, Wenjie and Wang, Xingyao and Wang, Yichuan and Zhuang, Siyuan and Liu, Shu and Schroeder, Luis Gaspar and Xia, Tian and Mao, Huanzhi and others},
  journal={arXiv preprint arXiv:2502.08235},
  year={2025}
}

\section*{NeurIPS Paper Checklist}

The checklist is designed to encourage best practices for responsible machine learning research, addressing issues of reproducibility, transparency, research ethics, and societal impact. Do not remove the checklist: {\bf The papers not including the checklist will be desk rejected.} The checklist should follow the references and follow the (optional) supplemental material.  The checklist does NOT count towards the page limit. 

Please read the checklist guidelines carefully for information on how to answer these questions. For each question in the checklist:
\begin{itemize}
    \item You should answer \answerYes{}, \answerNo{}, or \answerNA{}.
    \item \answerNA{} means either that the question is Not Applicable for that particular paper or the relevant information is Not Available.
    \item Please provide a short (1–2 sentence) justification right after your answer (even for NA). 
\end{itemize}

\begin{enumerate}

\item {\bf Claims}
    \item[] Question: Do the main claims made in the abstract and introduction accurately reflect the paper's contributions and scope?
    \item[] Answer: \answerYes{} 
    \item[] Justification: The abstract and introduction sections accurately reflect the paper's contributions and scope.
    \item[] Guidelines:
    \begin{itemize}
        \item The answer NA means that the abstract and introduction do not include the claims made in the paper.
        \item The abstract and/or introduction should clearly state the claims made, including the contributions made in the paper and important assumptions and limitations. A No or NA answer to this question will not be perceived well by the reviewers. 
        \item The claims made should match theoretical and experimental results, and reflect how much the results can be expected to generalize to other settings. 
        \item It is fine to include aspirational goals as motivation as long as it is clear that these goals are not attained by the paper. 
    \end{itemize}

\item {\bf Limitations}
    \item[] Question: Does the paper discuss the limitations of the work performed by the authors?
    \item[] Answer: \answerYes{} 
    \item[] Justification: We discuss the limitations of our work at Appendix~\ref{app-limitations}.
    \item[] Guidelines:
    \begin{itemize}
        \item The answer NA means that the paper has no limitation while the answer No means that the paper has limitations, but those are not discussed in the paper. 
        \item The authors are encouraged to create a separate "Limitations" section in their paper.
        \item The paper should point out any strong assumptions and how robust the results are to violations of these assumptions (e.g., independence assumptions, noiseless settings, model well-specification, asymptotic approximations only holding locally). The authors should reflect on how these assumptions might be violated in practice and what the implications would be.
        \item The authors should reflect on the scope of the claims made, e.g., if the approach was only tested on a few datasets or with a few runs. In general, empirical results often depend on implicit assumptions, which should be articulated.
        \item The authors should reflect on the factors that influence the performance of the approach. For example, a facial recognition algorithm may perform poorly when image resolution is low or images are taken in low lighting. Or a speech-to-text system might not be used reliably to provide closed captions for online lectures because it fails to handle technical jargon.
        \item The authors should discuss the computational efficiency of the proposed algorithms and how they scale with dataset size.
        \item If applicable, the authors should discuss possible limitations of their approach to address problems of privacy and fairness.
        \item While the authors might fear that complete honesty about limitations might be used by reviewers as grounds for rejection, a worse outcome might be that reviewers discover limitations that aren't acknowledged in the paper. The authors should use their best judgment and recognize that individual actions in favor of transparency play an important role in developing norms that preserve the integrity of the community. Reviewers will be specifically instructed to not penalize honesty concerning limitations.
    \end{itemize}

\item {\bf Theory assumptions and proofs}
    \item[] Question: For each theoretical result, does the paper provide the full set of assumptions and a complete (and correct) proof?
    \item[] Answer: \answerYes{} 
    \item[] Justification: We have provided full assumptions and proof for our theoretical analysis in Section~\ref{sec-theoretical-analysis}.
    \item[] Guidelines:
    \begin{itemize}
        \item The answer NA means that the paper does not include theoretical results. 
        \item All the theorems, formulas, and proofs in the paper should be numbered and cross-referenced.
        \item All assumptions should be clearly stated or referenced in the statement of any theorems.
        \item The proofs can either appear in the main paper or the supplemental material, but if they appear in the supplemental material, the authors are encouraged to provide a short proof sketch to provide intuition. 
        \item Inversely, any informal proof provided in the core of the paper should be complemented by formal proofs provided in appendix or supplemental material.
        \item Theorems and Lemmas that the proof relies upon should be properly referenced. 
    \end{itemize}

    \item {\bf Experimental result reproducibility}
    \item[] Question: Does the paper fully disclose all the information needed to reproduce the main experimental results of the paper to the extent that it affects the main claims and/or conclusions of the paper (regardless of whether the code and data are provided or not)?
    \item[] Answer: \answerYes{} 
    \item[] Justification: We have provided all dataset and implementation details in Section~\ref{sec-setup} and Appendix~\ref{app-datasets} and \ref{app-training}.
    \item[] Guidelines:
    \begin{itemize}
        \item The answer NA means that the paper does not include experiments.
        \item If the paper includes experiments, a No answer to this question will not be perceived well by the reviewers: Making the paper reproducible is important, regardless of whether the code and data are provided or not.
        \item If the contribution is a dataset and/or model, the authors should describe the steps taken to make their results reproducible or verifiable. 
        \item Depending on the contribution, reproducibility can be accomplished in various ways. For example, if the contribution is a novel architecture, describing the architecture fully might suffice, or if the contribution is a specific model and empirical evaluation, it may be necessary to either make it possible for others to replicate the model with the same dataset, or provide access to the model. In general. releasing code and data is often one good way to accomplish this, but reproducibility can also be provided via detailed instructions for how to replicate the results, access to a hosted model (e.g., in the case of a large language model), releasing of a model checkpoint, or other means that are appropriate to the research performed.
        \item While NeurIPS does not require releasing code, the conference does require all submissions to provide some reasonable avenue for reproducibility, which may depend on the nature of the contribution. For example
        \begin{enumerate}
            \item If the contribution is primarily a new algorithm, the paper should make it clear how to reproduce that algorithm.
            \item If the contribution is primarily a new model architecture, the paper should describe the architecture clearly and fully.
            \item If the contribution is a new model (e.g., a large language model), then there should either be a way to access this model for reproducing the results or a way to reproduce the model (e.g., with an open-source dataset or instructions for how to construct the dataset).
            \item We recognize that reproducibility may be tricky in some cases, in which case authors are welcome to describe the particular way they provide for reproducibility. In the case of closed-source models, it may be that access to the model is limited in some way (e.g., to registered users), but it should be possible for other researchers to have some path to reproducing or verifying the results.
        \end{enumerate}
    \end{itemize}

\item {\bf Open access to data and code}
    \item[] Question: Does the paper provide open access to the data and code, with sufficient instructions to faithfully reproduce the main experimental results, as described in supplemental material?
    \item[] Answer: \answerYes{} 
    \item[] Justification: We provide full details of our datasets at Appendix~\ref{app-datasets}.
    \item[] Guidelines:
    \begin{itemize}
        \item The answer NA means that paper does not include experiments requiring code.
        \item Please see the NeurIPS code and data submission guidelines (\url{https://nips.cc/public/guides/CodeSubmissionPolicy}) for more details.
        \item While we encourage the release of code and data, we understand that this might not be possible, so “No” is an acceptable answer. Papers cannot be rejected simply for not including code, unless this is central to the contribution (e.g., for a new open-source benchmark).
        \item The instructions should contain the exact command and environment needed to run to reproduce the results. See the NeurIPS code and data submission guidelines (\url{https://nips.cc/public/guides/CodeSubmissionPolicy}) for more details.
        \item The authors should provide instructions on data access and preparation, including how to access the raw data, preprocessed data, intermediate data, and generated data, etc.
        \item The authors should provide scripts to reproduce all experimental results for the new proposed method and baselines. If only a subset of experiments are reproducible, they should state which ones are omitted from the script and why.
        \item At submission time, to preserve anonymity, the authors should release anonymized versions (if applicable).
        \item Providing as much information as possible in supplemental material (appended to the paper) is recommended, but including URLs to data and code is permitted.
    \end{itemize}

\item {\bf Experimental setting/details}
    \item[] Question: Does the paper specify all the training and test details (e.g., data splits, hyperparameters, how they were chosen, type of optimizer, etc.) necessary to understand the results?
    \item[] Answer: \answerYes{} 
    \item[] Justification: We provide all necessary training and test details in Section~\ref{sec-setup}, Appendix~\ref{app-training} and the used prompt templates in Appendix~\ref{app-prompts}.
    \item[] Guidelines:
    \begin{itemize}
        \item The answer NA means that the paper does not include experiments.
        \item The experimental setting should be presented in the core of the paper to a level of detail that is necessary to appreciate the results and make sense of them.
        \item The full details can be provided either with the code, in appendix, or as supplemental material.
    \end{itemize}

\item {\bf Experiment statistical significance}
    \item[] Question: Does the paper report error bars suitably and correctly defined or other appropriate information about the statistical significance of the experiments?
    \item[] Answer: \answerYes{} 
    \item[] Justification: In Section~\ref{sec-experiments}, we provide the experimental results that support the main claims of our paper and conduct some detailed analysis to examine some hyper-parameters in our approach.
    \item[] Guidelines:
    \begin{itemize}
        \item The answer NA means that the paper does not include experiments.
        \item The authors should answer "Yes" if the results are accompanied by error bars, confidence intervals, or statistical significance tests, at least for the experiments that support the main claims of the paper.
        \item The factors of variability that the error bars are capturing should be clearly stated (for example, train/test split, initialization, random drawing of some parameter, or overall run with given experimental conditions).
        \item The method for calculating the error bars should be explained (closed form formula, call to a library function, bootstrap, etc.)
        \item The assumptions made should be given (e.g., Normally distributed errors).
        \item It should be clear whether the error bar is the standard deviation or the standard error of the mean.
        \item It is OK to report 1-sigma error bars, but one should state it. The authors should preferably report a 2-sigma error bar than state that they have a 96\% CI, if the hypothesis of Normality of errors is not verified.
        \item For asymmetric distributions, the authors should be careful not to show in tables or figures symmetric error bars that would yield results that are out of range (e.g. negative error rates).
        \item If error bars are reported in tables or plots, The authors should explain in the text how they were calculated and reference the corresponding figures or tables in the text.
    \end{itemize}

\item {\bf Experiments compute resources}
    \item[] Question: For each experiment, does the paper provide sufficient information on the computer resources (type of compute workers, memory, time of execution) needed to reproduce the experiments?
    \item[] Answer: \answerYes{} 
    \item[] Justification: We provide information about computer resources at Appendix~\ref{app-training}.
    \item[] Guidelines:
    \begin{itemize}
        \item The answer NA means that the paper does not include experiments.
        \item The paper should indicate the type of compute workers CPU or GPU, internal cluster, or cloud provider, including relevant memory and storage.
        \item The paper should provide the amount of compute required for each of the individual experimental runs as well as estimate the total compute. 
        \item The paper should disclose whether the full research project required more compute than the experiments reported in the paper (e.g., preliminary or failed experiments that didn't make it into the paper). 
    \end{itemize}
    
\item {\bf Code of ethics}
    \item[] Question: Does the research conducted in the paper conform, in every respect, with the NeurIPS Code of Ethics \url{https://neurips.cc/public/EthicsGuidelines}?
    \item[] Answer: \answerYes{} 
    \item[] Justification: Our paper follows the NeurIPS Code of Ethics.
    \item[] Guidelines:
    \begin{itemize}
        \item The answer NA means that the authors have not reviewed the NeurIPS Code of Ethics.
        \item If the authors answer No, they should explain the special circumstances that require a deviation from the Code of Ethics.
        \item The authors should make sure to preserve anonymity (e.g., if there is a special consideration due to laws or regulations in their jurisdiction).
    \end{itemize}

\item {\bf Broader impacts}
    \item[] Question: Does the paper discuss both potential positive societal impacts and negative societal impacts of the work performed?
    \item[] Answer: \answerYes{} 
    \item[] Justification: We have discuss both potential positive societal impacts and negative societal impacts of our work at Appendix~\ref{app-impacts}.
    \item[] Guidelines:
    \begin{itemize}
        \item The answer NA means that there is no societal impact of the work performed.
        \item If the authors answer NA or No, they should explain why their work has no societal impact or why the paper does not address societal impact.
        \item Examples of negative societal impacts include potential malicious or unintended uses (e.g., disinformation, generating fake profiles, surveillance), fairness considerations (e.g., deployment of technologies that could make decisions that unfairly impact specific groups), privacy considerations, and security considerations.
        \item The conference expects that many papers will be foundational research and not tied to particular applications, let alone deployments. However, if there is a direct path to any negative applications, the authors should point it out. For example, it is legitimate to point out that an improvement in the quality of generative models could be used to generate deepfakes for disinformation. On the other hand, it is not needed to point out that a generic algorithm for optimizing neural networks could enable people to train models that generate Deepfakes faster.
        \item The authors should consider possible harms that could arise when the technology is being used as intended and functioning correctly, harms that could arise when the technology is being used as intended but gives incorrect results, and harms following from (intentional or unintentional) misuse of the technology.
        \item If there are negative societal impacts, the authors could also discuss possible mitigation strategies (e.g., gated release of models, providing defenses in addition to attacks, mechanisms for monitoring misuse, mechanisms to monitor how a system learns from feedback over time, improving the efficiency and accessibility of ML).
    \end{itemize}
    
\item {\bf Safeguards}
    \item[] Question: Does the paper describe safeguards that have been put in place for responsible release of data or models that have a high risk for misuse (e.g., pretrained language models, image generators, or scraped datasets)?
    \item[] Answer: \answerNA{} 
    \item[] Justification: Our work uses existing open-source models and datasets. 
    \item[] Guidelines:
    \begin{itemize}
        \item The answer NA means that the paper poses no such risks.
        \item Released models that have a high risk for misuse or dual-use should be released with necessary safeguards to allow for controlled use of the model, for example by requiring that users adhere to usage guidelines or restrictions to access the model or implementing safety filters. 
        \item Datasets that have been scraped from the Internet could pose safety risks. The authors should describe how they avoided releasing unsafe images.
        \item We recognize that providing effective safeguards is challenging, and many papers do not require this, but we encourage authors to take this into account and make a best faith effort.
    \end{itemize}

\item {\bf Licenses for existing assets}
    \item[] Question: Are the creators or original owners of assets (e.g., code, data, models), used in the paper, properly credited and are the license and terms of use explicitly mentioned and properly respected?
    \item[] Answer: \answerYes{} 
    \item[] Justification: We have cited the papers for our used datasets and models and mentioned their license in Section~\ref{sec-setup} and Appendix~\ref{app-datasets}.
    \item[] Guidelines:
    \begin{itemize}
        \item The answer NA means that the paper does not use existing assets.
        \item The authors should cite the original paper that produced the code package or dataset.
        \item The authors should state which version of the asset is used and, if possible, include a URL.
        \item The name of the license (e.g., CC-BY 4.0) should be included for each asset.
        \item For scraped data from a particular source (e.g., website), the copyright and terms of service of that source should be provided.
        \item If assets are released, the license, copyright information, and terms of use in the package should be provided. For popular datasets, \url{paperswithcode.com/datasets} has curated licenses for some datasets. Their licensing guide can help determine the license of a dataset.
        \item For existing datasets that are re-packaged, both the original license and the license of the derived asset (if it has changed) should be provided.
        \item If this information is not available online, the authors are encouraged to reach out to the asset's creators.
    \end{itemize}

\item {\bf New assets}
    \item[] Question: Are new assets introduced in the paper well documented and is the documentation provided alongside the assets?
    \item[] Answer: \answerNA{} 
    \item[] Justification: Our work uses existing datasets and models.
    \item[] Guidelines:
    \begin{itemize}
        \item The answer NA means that the paper does not release new assets.
        \item Researchers should communicate the details of the dataset/code/model as part of their submissions via structured templates. This includes details about training, license, limitations, etc. 
        \item The paper should discuss whether and how consent was obtained from people whose asset is used.
        \item At submission time, remember to anonymize your assets (if applicable). You can either create an anonymized URL or include an anonymized zip file.
    \end{itemize}

\item {\bf Crowdsourcing and research with human subjects}
    \item[] Question: For crowdsourcing experiments and research with human subjects, does the paper include the full text of instructions given to participants and screenshots, if applicable, as well as details about compensation (if any)? 
    \item[] Answer: \answerNA{} 
    \item[] Justification: We do not involve crowdsourcing and research with human subjects.
    \item[] Guidelines:
    \begin{itemize}
        \item The answer NA means that the paper does not involve crowdsourcing nor research with human subjects.
        \item Including this information in the supplemental material is fine, but if the main contribution of the paper involves human subjects, then as much detail as possible should be included in the main paper. 
        \item According to the NeurIPS Code of Ethics, workers involved in data collection, curation, or other labor should be paid at least the minimum wage in the country of the data collector. 
    \end{itemize}

\item {\bf Institutional review board (IRB) approvals or equivalent for research with human subjects}
    \item[] Question: Does the paper describe potential risks incurred by study participants, whether such risks were disclosed to the subjects, and whether Institutional Review Board (IRB) approvals (or an equivalent approval/review based on the requirements of your country or institution) were obtained?
    \item[] Answer: \answerNA{} 
    \item[] Justification: We do not involve crowdsourcing and research with human subjects.
    \item[] Guidelines:
    \begin{itemize}
        \item The answer NA means that the paper does not involve crowdsourcing nor research with human subjects.
        \item Depending on the country in which research is conducted, IRB approval (or equivalent) may be required for any human subjects research. If you obtained IRB approval, you should clearly state this in the paper. 
        \item We recognize that the procedures for this may vary significantly between institutions and locations, and we expect authors to adhere to the NeurIPS Code of Ethics and the guidelines for their institution. 
        \item For initial submissions, do not include any information that would break anonymity (if applicable), such as the institution conducting the review.
    \end{itemize}

\item {\bf Declaration of LLM usage}
    \item[] Question: Does the paper describe the usage of LLMs if it is an important, original, or non-standard component of the core methods in this research? Note that if the LLM is used only for writing, editing, or formatting purposes and does not impact the core methodology, scientific rigorousness, or originality of the research, declaration is not required.
    \item[] Answer: \answerNA{} 
    \item[] Justification: The core method development in this paper does not involve LLMs as any important, original, or non-standard components.
    \item[] Guidelines:
    \begin{itemize}
        \item The answer NA means that the core method development in this research does not involve LLMs as any important, original, or non-standard components.
        \item Please refer to our LLM policy (\url{https://neurips.cc/Conferences/2025/LLM}) for what should or should not be described.
    \end{itemize}

\end{enumerate}

\newpage
\appendix

\section*{Technical Appendices and Supplementary Material}

\section{Limitations}
\label{app-limitations}

Our theoretical analysis is mainly focused on the prevalent and simplified binary reward modeling. Future work can extend it to any arbitrary bounded real rewards. The scope of our experiments is constrained by the substantial computational cost of much larger models (e.g., 14B, 32B). Future work can explore applying our approach to larger models. 

\section{Broader Impact}
\label{app-impacts}

Our proposed FSPO algorithm has the potential to significantly improve the trustworthiness and reliability of LLMs in critical applications, such as education, healthcare, and scientific research, by reducing factual hallucinations during complex reasoning. However, the same technology may also pose negative societal risks if misused, for example, it could be applied to train persuasive yet selectively factual systems in domains such as political propaganda or misinformation campaigns, where step-wise coherence may mask underlying biases.

\section{Datasets}
\label{app-datasets}

To reduce hallucinations while improving the complex reasoning capabilities of LLMs, we mix two datasets for fine-tuning base models, i.e., a subset of HotpotQA~\cite{r1-searcher} and SimpleRL~\cite{zeng2025simplerl}. Previous study~\cite{r1-searcher} construct a subset from HotpotQA~\cite{hotpotqa} and 2WikiMultiHopQA~\cite{2wikimultihopqa} with varying difficulty levels including 4761 and 3737 samples from HotpotQA and 2WikiMultiHopQA, respectively. Each sample includes the question, ground-truth answer, and the corresponding segments from Wikipedia. We randomly select 2K samples from this subset for conducting our proposed FSPO approach. SimpleRL consists of 8523 mathematical samples, which are used to perform traditional RL fine-tuning based on GRPO in our training. For hallucination datasets, TruthfulQA~\cite{truthfulqa}, HaluEval-QA~\cite{halueval}, and HalluQA~\cite{halluqa} contain 817, 10000, 450 samples, respectively. HalluQA is a Chinese dataset while the others are English datasets. For mathematical reasoning datasets, GSM-8K~\cite{cobbe2021gsm8k} and MATH-500~\cite{hendrycksmath2021} contains 1319 and 500 samples, respectively. There are 30 samples in both AIME 2024~\cite{aime2024} and AIME 2025~\cite{aime2024}. These datasets are in MIT or Apache-2.0 license.

\section{Training Infrastructure}
\label{app-training}

We train our model on 4$\times$H100 80G GPU cards for about 24 hours each time. We train our model using the verl~\cite{sheng2024hybridflow} framework. During training, we use a train batch size of 8, generate 8 rollouts for each prompt, set a maximum prompt and response length of 2,048 tokens, and train using a mini-batch size of 1,024. We use a constant learning rate of 4e-7 and temperature parameter of 1.0. The coefficient for KL loss is set to 1e-3 and the clip ratio is 0.2. During evaluation, we set the sampling temperature to 1.0 for our models and all baselines.

\section{Preliminary Experiments}
\label{app-preliminary}

To demystify the question of ``\textit{To what extent do reasoning models maintain factuality after RL fine-tuning?}'', we conduct preliminary experiments on three hallucination benchmarks with twelve LLMs. The results show that after large-scale training with RL or long CoT data, models demonstrate significantly higher degree of hallucinations. To investigate the underlying reasons, we randomly select 200 samples (i.e., examples that DeepSeek-V3 answers correctly but DeepSeek-R1 does not) from HaluEval-QA and manually check their hallucination reasons. Specifically, the hallucination sources within our experiments can be roughly categorized into four types:
\begin{itemize}
    \item \textit{Intermediate Reasoning Error}: This is the most common source where the intermediate reasoning process is incorrect, which naturally results in wrong answers.
    \item \textit{Reasoning and Answer Inconsistency}: This is the second most common source for which the intermediate reasoning process is correct but the final answer is wrong. 
    \item \textit{Overthink}: This occurs when the intermediate reasoning process contains correct segments that can lead to the right answers but the model still continues to reason further, finally getting the wrong answers. This phenomenon is called ``\textit{overthink}'', investigated in extensive studies~\cite{cuadron2025danger}.
    \item \textit{Others}: This includes the remaining cases, such as the output text is incomplete or a mismatch between generated answers and ground truth.
\end{itemize}

\begin{table}[t]
	\small
	\centering
        \caption{Prompt for base and instruct models in mathematical reasoning tasks.} 
        \label{tab:math-prompt}
	\begin{tabular}{c|p{0.8\textwidth}}
		\toprule
            \textbf{Model Type} & \textbf{Prompt} \\
            \midrule
            \multirow{8}{*}{Base model} & A conversation between User and Assistant. The user asks a question, and the assistant solves it. The assistant first thinks about the reasoning process in the mind and then provides the user with the final  answer. The reasoning process and answer are enclosed within <think> </think> and <answer> </answer> tags, respectively. For example, <think> reasoning process here </think> <answer> answer here </answer>. The Assistant shows the reasoning process within <think> </think> tags, and ONLY return the FINAL ANSWER within <answer> </answer> tags. For example: <answer> 12 </answer>. $\backslash$n$\backslash$n User: \{question\} $\backslash$n Assistant: Let me solve this step by step. <think> \\
            \midrule
            \multirow{6}{*}{Instruct model} & You are a helpful assistant. Given a question, you need to first think about 
        the reasoning process in the mind and then provide the final answer. The reasoning process and answer are enclosed 
        within <think> </think> and <answer> </answer> tags, respectively. For example, <think> reasoning process 
        here </think> <answer> answer here </answer>. You must show the reasoning process within <think> </think> tags, 
        and ONLY return the FINAL ANSWER within <answer> </answer> tags. For example: <answer> 12 </answer>. \\
            \bottomrule
	\end{tabular}
\end{table}

\begin{table}[t]
	\small
	\centering
        \caption{Prompt for base and instruct models in question answering tasks.} 
        \label{tab:qa-prompt}
	\begin{tabular}{c|p{0.8\textwidth}}
		\toprule
            \textbf{Model Type} & \textbf{Prompt} \\
            \midrule
            \multirow{8}{*}{Base model} & A conversation between User and Assistant. The user asks a question, and the assistant solves it. The assistant first thinks about the reasoning process in the mind and then provides the user with the final 
        answer. The reasoning process and answer are enclosed within <think> </think> and <answer> </answer> tags, 
        respectively. For example, <think> reasoning process here </think> <answer> answer here </answer>. The Assistant 
        shows the reasoning process within <think> </think> tags, and ONLY return the FINAL ANSWER within <answer> </answer> 
        tags. For example: <answer> Kim Marton </answer>. $\backslash$n$\backslash$n User: \{question\} $\backslash$n Assistant: Let me solve this step by step. <think> \\
            \midrule
            \multirow{6}{*}{Instruct model} & You are a helpful assistant. Given a question, you need to first think about 
        the reasoning process in the mind and then provide the final answer. The reasoning process and answer are enclosed 
        within <think> </think> and <answer> </answer> tags, respectively. For example, <think> reasoning process 
        here </think> <answer> answer here </answer>. You must show the reasoning process within <think> </think> tags, 
        and ONLY return the FINAL ANSWER within <answer> </answer> tags. For example: <answer> Kim Marton </answer>. \\
            \bottomrule
	\end{tabular}
\end{table}

\section{Prompt Templates}
\label{app-prompts}

We present our prompts used during training and evaluation in Table~\ref{tab:math-prompt}, Table~\ref{tab:qa-prompt}, and Table~\ref{tab:halueval-prompt}.

\begin{table}[t]
	\small
	\centering
        \caption{Prompt for base and instruct models in HaluEval-QA dataset.} 
        \label{tab:halueval-prompt}
	\begin{tabular}{c|p{0.8\textwidth}}
		\toprule
            \textbf{Model Type} & \textbf{Prompt} \\
            \midrule
            \multirow{10}{*}{Base model} & A conversation between User and Assistant. The user asks a question, and the assistant solves it. The assistant first thinks about the reasoning process in the mind and then provides the user with the final
        answer. The reasoning process and answer are enclosed within <think> </think> and <answer> </answer> tags,
        respectively. For example, <think> reasoning process here </think> <answer> answer here </answer>. Now the user
        asks a question and provides two candidate answers, the assistant needs to determine which answer is correct.
        The Assistant shows the reasoning process within <think> </think> tags, and ONLY return the FINAL ANSWER within
        <answer> </answer> tags. For example: <answer> Kim Marton </answer>. $\backslash$n$\backslash$n User: \#Question\#: \{question\} $\backslash$n \#Candidate Answer 1\#: \{candidate\_answer\_1\} $\backslash$n \#Candidate Answer 2\#: \{candidate\_answer\_2\} $\backslash$n Assistant: Let me solve this step by step. <think> \\
            \midrule
            \multirow{7}{*}{Instruct model} & You are a helpful assistant. Given a question, you need to first think about
        the reasoning process in the mind and then provide the final answer. The reasoning process and answer are enclosed
        within <think> </think> and <answer> </answer> tags, respectively. For example, <think> reasoning process
        here </think> <answer> answer here </answer>. Now the user asks a question and provides two candidate answers, you need to
        determine which answer is correct. You must show the reasoning process within
        <think> </think> tags, and ONLY return the FINAL ANSWER within <answer> </answer> tags, such as <answer> Kim Marton
        </answer>. \\
            \bottomrule
	\end{tabular}
\end{table}

\end{document}